\documentclass{article}
\usepackage{PRIMEarxiv}

\usepackage[utf8]{inputenc} 
\usepackage[T1]{fontenc}    
\usepackage{hyperref}       
\usepackage{url}            
\usepackage{booktabs}       
\usepackage{amsfonts}       
\usepackage{nicefrac}       
\usepackage{microtype}      
\usepackage{xcolor}         

\usepackage{amsthm}
\usepackage{amsmath}
\newtheorem{theorem}{Theorem}
\newtheorem{lemma}{Lemma}

\usepackage{optidef}

\usepackage{mwe}

\usepackage{tikz}
\usetikzlibrary{trees}

\usepackage{caption}
\usepackage{subcaption}

\usepackage{algorithm}
\usepackage{algpseudocode}
\bibliographystyle{apalike}
\usepackage{natbib}

\usepackage{enumitem}
\setitemize{noitemsep,topsep=0pt,parsep=0pt,partopsep=0pt}





\usepackage{nicefrac}       
\usepackage{microtype}      
\usepackage{fancyhdr}       
\usepackage{graphicx}       

\pagestyle{fancy}
\thispagestyle{empty}
\rhead{ \textit{ }} 

\fancyhead[LO]{Cost-aware Generalized $\alpha$-investing for Multiple Hypothesis Testing}

\title{Cost-aware Generalized $\alpha$-investing for Multiple Hypothesis Testing}

\author{
  Thomas Cook \\
  Dept. of Mathematics \& Statistics \\
  University of Massachusetts \\
   Amherst, MA 01002\\
  \texttt{tjcook@umass.edu} \\
   \And
   Harsh Vardhan Dubey \\
  Dept. of Mathematics \& Statistics \\
  University of Massachusetts \\
   Amherst, MA 01002\\
  \texttt{hdubey@umass.edu} \\
   \And
  Ji Ah Lee \\
  Dept. of Mathematics \& Statistics \\
  University of Massachusetts \\
   Amherst, MA 01002\\
 \texttt{jiahlee@umass.edu} \\
   \And
  Guangyu Zhu \\
  Dept. of Statistics \\
  University of Rhode Island \\
  Kingston, RI 02881\\
  \texttt{guangyuzhu@uri.edu} \\
   \And
  Tingting Zhao \\
  Dept. of Information Systems \& Analytics \\
  Bryant University \\
  Smithfield, RI 02917\\
  \texttt{tzhao@bryant.edu} \\
   \And
  Patrick Flaherty\thanks{Corresponding Author} \\
  Dept. of Mathematics \& Statistics \\
  University of Massachusetts \\
   Amherst, MA 01002\\
  \texttt{pflaherty@umass.edu} \\
}

\begin{document}

\maketitle

\begin{abstract}
We consider the problem of sequential multiple hypothesis testing with nontrivial data collection costs.
This problem appears, for example, when conducting biological experiments to identify differentially expressed genes of a disease process.
This work builds on the generalized $\alpha$-investing framework which enables control of the false discovery rate in a sequential testing setting. 
We make a theoretical analysis of the long term asymptotic behavior of $\alpha$-wealth which motivates a consideration of sample size in the $\alpha$-investing decision rule. 
Posing the testing process as a game with nature, we construct a decision rule that optimizes the expected $\alpha$-wealth reward (ERO) and provides an optimal sample size for each test.
Empirical results show that a cost-aware ERO decision rule correctly rejects more false null hypotheses than other methods for $n=1$ where $n$ is the sample size.
When the sample size is not fixed cost-aware ERO uses a prior on the null hypothesis to adaptively allocate of the sample budget to each test.
We extend cost-aware ERO investing to finite-horizon testing which enables the decision rule to allocate samples in a non-myopic manner.
Finally, empirical tests on real data sets from biological experiments show that cost-aware ERO balances the allocation of samples to an individual test against the allocation of samples across multiple tests.
\end{abstract}



\section{INTRODUCTION}\label{sec:intro}

Machine learning systems are increasingly used to make decisions in uncertain environments.
Decision-making can be viewed in the framework of hypothesis testing in that a decision is made as the result of a rejection of the null hypothesis \citep{arrow1949bayes,dickey1970weighted, blackwell1979theory,verdinelli1995computing,parmigiani2009decision,berger2013statistical}.
When multiple hypotheses are under consideration, a FDR control procedure provides a way to control the rate of erroneous rejections in a batch of hypotheses for small-scale data sets \citep{benjamini1995controlling,storey2002direct,storey2004strong,benjamini2006adaptive,zeisel2011fdr, liang2012adaptive}.
However, these procedures typically require the test statistics of \textit{all} of the hypotheses under consideration so that the p-values may be sorted and a set of hypotheses may be selected for rejection.
In many modern problems the test statistics for all the hypotheses may not be known simultaneously and standard FDR procedures do not work.

Online FDR methods have recently been developed to address the need for FDR control procedures that maintain control for a sequence of tests when the test statistics are not all known at one time.
\citet{tukey1994collected} proposed the idea that one starts with a fixed amount ``$\alpha$-wealth'' and for each hypothesis under consideration, the researcher may choose to spend some of that wealth until it is all gone.
\citet{foster2008alpha} extended $\alpha$-spending by allowing some return on the expenditure of $\alpha$-wealth if the hypothesis is successfully rejected.
\citet{aharoni2014generalized} introduced generalized $\alpha$-investing and provided a deterministic decision rule to optimally set the $\alpha$-level for each test given the history of test outcomes.
A full review of related work is in Section~\ref{sec:related-work}. 

\subsection{Contributions} We extend generalized $\alpha$-investing to address the problem of online FDR control where the cost of data is not negligible.
Our specific contributions are:
\begin{itemize}
\item a theoretical analysis of the long term asymptotic behavior of $\alpha$-wealth in an $\alpha$-investing procedure,
\item a generalized $\alpha$-investing procedure for sequential testing that simultaneously optimizes sample size and $\alpha$-level using game-theoretic principles,
\item a non-myopic $\alpha$-investing procedure that maximizes the expected reward over a finite horizon of tests.
\end{itemize}

\subsection{Related Work}
\label{sec:related-work}
Tukey proposed the notion of $\alpha$-wealth to control the family-wise error rate for a sequence of tests~\citep{tukey1991philosophy,tukey1994collected}. \citet{foster2008alpha} proposed $\alpha$-investing, an online procedure that controls the marginal FDR (mFDR) for any stopping time in the testing sequence.
\citet{aharoni2014generalized} introduced generalized $\alpha$-investing  and provided a deterministic decision rule to maximize the expected reward for the next test in the sequence.
Recently, there has been much work on online FDR control in the context of A/B testing, directed acyclic graphs  and quality-preserving databases \citep{yang2017framework,ramdas2019sequential}. \citet{javanmard2018online} first proved that generalized $\alpha$-investing controls FDR, not only mFDR under an online setting with an algorithm called LORD. \citet{ramdas2017online} proposed the LORD++ to improve the existing LORD. Recent work leverages contextual information in the data to improve the statistical power while controlling FDR offline \citep{xia2017neuralfdr} and online \citep{ chen2020contextual}. \citet{ramdas2018saffron} then proposed SAFFRON, which also belongs to the $\alpha$-investing framework but adaptively estimates the proportion of the true nulls. 
All the aforementioned methods are synchronous, which means that each test can only start once the previous test has finished. \citet{zrnic2021asynchronous} extend $\alpha$-investing methods to an asynchronous setting where tests are allowed to overlap in time. These state-of-the-art online FDR control $\alpha$-investing methods do not address the needs for testing when the cost of data is not negligible. 
So, we propose a novel $\alpha$-investing method for a setting that takes into account the cost of data sample collection, the sample size choice, and prior beliefs about the probability of rejection.


Section~\ref{sec:alpha-investing} is a technical background of generalized $\alpha$-investing. 
Section~\ref{sec:theory} contains a theoretical analysis of the long term asymptotic behavior of the $\alpha$-wealth.
Section~\ref{sec:caero} presents a cost-aware generalized $\alpha$-investing decision rule based on the game-theoretic equalizing strategy.
Section~\ref{sec:synthetic-experiments} presents empirical experiments that show that the cost-aware ERO decision rule improves upon existing procedures when data collection costs are nontrivial. 
Section~\ref{sec:real-experiments} presents analysis of two real data sets from gene expression studies that shows cost-aware $\alpha$-investing aligns with the overall objectives of the application setting.
Finally, Section~\ref{sec:discussion} describes limitations and future work.

\section{BACKGROUND ON GENERALIZED $\alpha$-INVESTING}
\label{sec:alpha-investing}

Following the notation of \citet{foster2008alpha}, consider $m$ null hypotheses, $H_1, \ldots, H_m$ where $H_j \subset \Theta_j$.
The random variable $R_j \in \{0,1\}$ is an indicator of whether $H_j$ is rejected regardless of whether the null is true or not.
The random variable $V_j \in \{0,1\}$ indicates whether the test $H_j$ is both true and rejected.
These variables are aggregated as $R(m) = \sum_{j=1}^m R_j$ and $V(m) = \sum_{j=1}^m V_j$.
With these definitions, the FDR~\citep{benjamini1995controlling} is 
\begin{equation*}
    \text{FDR}(m) = P_\theta (R(m) > 0)\  \mathbb{E}_\theta \left[ \frac{V(m)}{R(m)} \mid R(m) > 0 \right],
\end{equation*}
and the marginal false discovery rate (mFDR) is
\begin{equation*}
    \text{mFDR}_\eta (m) = \frac{ \mathbb{E}_\theta \left[ V(m) \right] }{ \mathbb{E}_\theta \left[ R(m) + \eta \right] }.
\end{equation*}
Setting $\eta = 1-\alpha$ provides weak control over the family-wise error rate at level $\alpha$.

\citet{aharoni2014generalized} make two assumptions in their development of generalized $\alpha$-investing:
\begin{eqnarray}
  \label{eqn:assumption1}
  \forall \theta_j \in H_j : P_{\theta_j}(R_j | R_{j-1}, \ldots, R_1) &\leq& \alpha_j, \\
  \label{eqn:assumption2}
  \forall \theta_j \not\in H_j : P_{\theta_j}(R_j | R_{j-1}, \ldots, R_1) &\leq& \rho_j,
\end{eqnarray}
where
\begin{equation}
  \label{eqn:bestpower}
\rho_j = \sup_{\theta_j \in \Theta_j - H_j} P_{\theta_j}(R_j=1).
\end{equation}
Assumption~\ref{eqn:assumption1} constrains the false positive rate to the level of the test and Assumption~\ref{eqn:assumption2} is an upper bound of $\rho_j$ on the power of the test.
A pool of $\alpha$-wealth, $W_{\alpha}(j)$, is available to spend on the $j$-th hypothesis.
The $\alpha$-wealth is updated according to the following equations:
\begin{eqnarray}
  \label{eqn:alpha-wealth-update}
  W_{\alpha}(0) &=& \alpha \eta, \\
  W_{\alpha}(j) &=& W_{\alpha}(j-1) - \varphi_j + R_j \psi_j.
\end{eqnarray}
A deterministic function $\mathcal{I}_{W_{\alpha}(0)}$ is an $\alpha$-investing rule that determines: the cost of conducting the $j$-th hypothesis test, $\varphi_j$; the reward for a successful rejection, $\psi_j$; and the level of the test, $\alpha_j$:
\begin{equation}
  \label{eqn:alpharule}
  (\varphi_j, \alpha_j, \psi_j) = \mathcal{I}_{W_{\alpha}(0)}(\{R_1, \ldots, R_{j-1}\}).
\end{equation}
The $\alpha$-investing rule depends only on the outcomes of the previous hypothesis tests.
The Foster-Stine cost depends hyperbolically on the level of the test $\varphi_j = \alpha_j/(1-\alpha_j)$.

Generalized $\alpha$-investing can be viewed in a game-theoretic framework where the outcome of the test (reject or fail-to-reject) is random and the procedure provides the optimal amount of ``ante'' to offer to play and ``payoff'' to demand should the test successfully reject.
We make use of this game theoretic interpretation in our contributions in Section~\ref{sec:varphi}.

\citet{aharoni2014generalized} derive a linear constraint on the reward $\psi_j$ to ensure that, for a given $(\varphi_j, \alpha_j)$, the mFDR is controlled at a level $\alpha$ by ensuring the sequence $A(j)=\alpha R_j - V_j+ \alpha\eta - W_\alpha(j)$ is a submartingale with respect to $R_j$. 
Note that this constraint is not on the $\alpha$-wealth process directly.
Their constraint is 
\begin{equation}
    \label{eqn:mfdr_control}
    \psi_j \leq \min \left( \frac{\varphi_j}{\rho_j} + \alpha, \frac{\varphi_j}{\alpha_j} + \alpha - 1\right).
\end{equation}
Maximizing the expected reward of the next hypothesis test, $\mathbb{E}(R_j)\psi_j$, leads to the following equality
\begin{equation}
    \label{eqn:ero}
    \frac{\varphi_j}{\rho_j} = \frac{\varphi_j}{\alpha_j} - 1.
\end{equation}
Note that this equality selects the point of intersection of the two parts of the constraint in \eqref{eqn:mfdr_control}.
ERO $\alpha$-investing provides two equations for three unknowns in the deterministic decision rule.
\citet{aharoni2014generalized} address this indeterminacy by considering three allocation schemes for $\varphi_j$: constant, relative, and relative200 and suggest that the investigator can explore various options and set $\varphi_j$ on their own. Further details on these schemes are given in Section \ref{sec:synthetic-experiments}.

Since the dominant paradigm in testing of biological hypotheses is a bounded finite range for $\Theta_j$, for the remainder we assume $\Theta_j = [0, \bar{\theta}_j]$ for some upper bound, $\bar{\theta}_j$, and $H_j = \{0\}$.
This scenario may be viewed as a test that the expression for gene $j$ is differentially increased in an experimental condition compared to a control.
We consider a simple z-test here for concreteness.
The power of a one-sided z-test is $(1-\beta) := 1 - \Phi \left( z_{1-\alpha} + \frac{(\mu_0-\mu_1)}{\sigma /  \sqrt{n}} \right)$
where $z_{1-\alpha} = \Phi^{-1}(1-\alpha)$ is the z-score corresponding to level $\alpha$, $\mu_0$ is the expected value of the simple null hypothesis, $\mu_1$ is the expected value of the simple alternative hypothesis, $\sigma$ is the standard deviation of the measurements, and $n$ is the sample size.

Using Equation~\eqref{eqn:bestpower}, the best power under the previously defined $\Theta_j$ is
\begin{equation}
  \label{eqn:ztestbestpower}
  \rho_j = 1 - \Phi \left( z_{1-\alpha_j} - \frac{\bar{\theta}_j }{\sigma_j/ \sqrt{n_j}} \right).
\end{equation}
The best power depends on: (1) the level of the test, $\alpha_j$, (2) the scale of the bound on the alternative, $\bar{\theta}_j$, (3) the sample size, $n_j$, and the measurement standard deviation, $\sigma_j$.
One may compare multiple measurement technologies based on their precision by exploring the effect of changing $\sigma_j$ --- for example, for a fixed budget and all other things equal, a trade-off can be computed between more samples with a higher variance technology, versus fewer samples with a lower variance technology.
For the remainder, we assume $\sigma_j$ is fixed and known.
ERO $\alpha$-investing for Neyman-Pearson testing problems is solved by the following nonlinear optimization problem:
\begin{maxi!}|s|
{\alpha_j, \psi_j}
{\mathbb{E}_{\theta} (R_j) \psi_j}
{\label{eqn:np-ero-alpha-opt}}
{}
\addConstraint{\psi_j}{\leq \frac{\varphi_j}{\rho_j} + \alpha\label{eqn:opt1-fdrconst1}}
\addConstraint{\psi_j}{\leq \frac{\varphi_j}{\alpha_j} + \alpha - 1 \label{eqn:opt1-fdrconst2}}
\addConstraint{\frac{\varphi_j}{\rho_j}}{= \frac{\varphi_j}{\alpha_j} - 1 \label{eqn:opt1-eroconst}}
\addConstraint{\rho_j}{= 1 - \Phi \left( z_{1-\alpha_j} - \frac{\bar{\theta}_j }{\sigma_j/ \sqrt{n_j}} \right)\label{eqn:opt1-powerconst}}
\end{maxi!}
Constraints~\ref{eqn:opt1-fdrconst1} and \ref{eqn:opt1-fdrconst2} correspond to \eqref{eqn:mfdr_control} which controls the mFDR level, and constraint~\eqref{eqn:opt1-eroconst} ensures the maximal expected reward for the $j$-th test.
The optimal ERO solution still depends on an external choice of the sample size $n_j$, and the cost of the test $\varphi_j$.

\section{LONG-TERM $\alpha$-WEALTH}
\label{sec:theory}

Since the levels of future tests depend on the amount of $\alpha$-wealth available at the time of the tests, a theoretical consideration in generalized $\alpha$-investing is whether the long-term $\alpha$-wealth is submartingale or supermartingale (stochastically non-decreasing or stochastically non-increasing) for a given decision-rule. 
Most prior works include some implicit consideration of the behavior of the long-term $\alpha$-wealth.
In \citet{zhou}, which predates the seminal work of \citet{foster2008alpha}, test levels are set such that testing may continue indefinitely, even in the worst case scenario of no rejections, while still utilizing all initial $\alpha$-wealth. 
\citet{foster2008alpha} discuss strategies for setting the level of the test and provide some examples designed to accumulate $\alpha$-wealth for future tests. 
They also discuss the practical and ethical concerns with sorting easily rejected tests so as to accumulate an arbitrary amount of  $\alpha$-wealth before conducting more uncertain tests. 
\citet{aharoni2014generalized} seek to optimize the expected reward of the current test in an effort to maximize the $\alpha$-wealth available, and, in-turn, the levels for future tests.
\citet{javanmard2018online} discuss setting the vector $\gamma$ such that the power is maximimized for a mixture model set a-priori for the hypothesis stream.
In all of these methods, the motivation is to have sufficient $\alpha$-wealth to conduct tests,with an appreciable power perpetually.
Here we outline two scenarios where the long-term $\alpha$-wealth can be either submartingale or  supermartingale. 

In order to state the theorems regarding the $\alpha$-wealth sequence, we require a lemma bounding $\alpha$-wealth as a function of the prior probability of the null hypothesis.

\begin{lemma}
	\label{lemma:fs-alpha-bound}
	Given an $\alpha_j$-level for the $j$-th hypothesis test from rule $\mathcal{I}(R_1, \ldots R_{j-1})$, the expected value of $\alpha$-wealth for Foster-Stine $\alpha$-investing is
	\begin{equation}
		\label{eqn:fs-bound}
		\begin{split}
		\mathbb{E}^{j-1}\left[W_{j}\right] &\leq \\  - \frac{\alpha_j}{1-\alpha_j} &+ \left[ \rho_j - (\rho_j - \alpha_j) q_{j} \right] \left( \alpha + \frac{\alpha_j}{1-\alpha_j} \right),
		\end{split}
	\end{equation}
	where $\mathbb{E}^{j-1}\left[W_j\right] = \mathbb{E}\left[W(j) -W(j-1) | W(j-1)\right]$, and $q_{j} = \Pr[\theta_j \in H_j]$, the prior probability (belief) that the $j$-th null hypothesis is true. In the case of a simple null and alternative $\Theta_j=\{0,\bar{\theta}_j\}$, the bound is tight.
\end{lemma}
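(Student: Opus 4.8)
The plan is to start from the $\alpha$-wealth recursion~\eqref{eqn:alpha-wealth-update}, take a one-step conditional expectation so that the wealth increment reduces to $-\varphi_j+\psi_j\,\mathbb{E}[R_j\mid\mathcal{H}_{j-1}]$, and then bound the reward $\psi_j$ and the expected rejection $\mathbb{E}[R_j\mid\mathcal{H}_{j-1}]$ one at a time before multiplying. Throughout, write $\mathcal{H}_{j-1}=(R_1,\dots,R_{j-1})$ for the realized history; conditioning on it fixes $\alpha_j$, the Foster--Stine cost $\varphi_j=\alpha_j/(1-\alpha_j)$, and $\psi_j$ through the rule $\mathcal{I}$, and since $W_\alpha(j)-W_\alpha(j-1)=-\varphi_j+R_j\psi_j$ we get $\mathbb{E}^{j-1}[W_j]=-\varphi_j+\psi_j\,\mathbb{E}[R_j\mid\mathcal{H}_{j-1}]$.

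The first step is to control the conditional rejection probability by conditioning further on whether the null holds. With the prior $q_j=\Pr[\theta_j\in H_j]$ and the law of total expectation,
\[
\mathbb{E}[R_j\mid\mathcal{H}_{j-1}]=q_j\,\Pr[R_j{=}1\mid\theta_j\in H_j,\mathcal{H}_{j-1}]+(1-q_j)\,\Pr[R_j{=}1\mid\theta_j\notin H_j,\mathcal{H}_{j-1}].
\]
Assumption~\eqref{eqn:assumption1} bounds the first probability by $\alpha_j$, and Assumption~\eqref{eqn:assumption2} together with~\eqref{eqn:bestpower} bounds the second by $\rho_j$, so $\mathbb{E}[R_j\mid\mathcal{H}_{j-1}]\le q_j\alpha_j+(1-q_j)\rho_j=\rho_j-(\rho_j-\alpha_j)q_j$.

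Next I would bound the reward using the mFDR-control constraint~\eqref{eqn:mfdr_control}. Keeping only its second branch, $\psi_j\le\varphi_j/\alpha_j+\alpha-1$, and substituting $\varphi_j=\alpha_j/(1-\alpha_j)$ collapses it to $\psi_j\le\alpha+\alpha_j/(1-\alpha_j)$; since $\rho_j\le 1$, a quick check shows this branch is actually the smaller of the two, so nothing is lost by discarding the first. Because $\psi_j\ge 0$ and $\mathbb{E}[R_j\mid\mathcal{H}_{j-1}]\ge 0$, the two one-sided bounds multiply, and plugging $-\varphi_j=-\alpha_j/(1-\alpha_j)$ into the increment identity yields exactly~\eqref{eqn:fs-bound}. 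If the statement is wanted conditioned on $W_\alpha(j-1)$ rather than on all of $\mathcal{H}_{j-1}$, the tower property closes the gap, since $W_\alpha(j-1)$ is a deterministic function of $\mathcal{H}_{j-1}$.

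For the tightness claim with $\Theta_j=\{0,\bar\theta_j\}$: the supremum in~\eqref{eqn:bestpower} is attained at the unique alternative point, so $\Pr[R_j{=}1\mid\theta_j\notin H_j,\mathcal{H}_{j-1}]=\rho_j$ exactly, and for the z-test the size under the simple null is exactly $\alpha_j$; hence Assumptions~\eqref{eqn:assumption1}--\eqref{eqn:assumption2} hold with equality and $\mathbb{E}[R_j\mid\mathcal{H}_{j-1}]=q_j\alpha_j+(1-q_j)\rho_j$. When the rule also places $\psi_j$ at the boundary of~\eqref{eqn:mfdr_control} --- as the \gls{ero} rule of~\eqref{eqn:ero} does, because $\mathbb{E}(R_j)\psi_j$ is increasing in $\psi_j$ --- every inequality in the chain becomes an equality, so the bound is attained. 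The step I expect to need the most care is the product bound $\psi_j\,\mathbb{E}[R_j\mid\mathcal{H}_{j-1}]\le\bigl(\rho_j-(\rho_j-\alpha_j)q_j\bigr)\bigl(\alpha+\alpha_j/(1-\alpha_j)\bigr)$: it holds only because both factors are nonnegative, and it is precisely here --- together with the $-\varphi_j$ term --- that the Foster--Stine form of $\varphi_j$ does the work of converting a $\rho_j$-dependent bound into the clean expression in~\eqref{eqn:fs-bound}.
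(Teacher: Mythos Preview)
Your argument is correct and tracks the paper's proof closely: write the one-step increment, expand $\Pr(R_j=1)$ by conditioning on $\{\theta_j\in H_j\}$, and bound the two pieces by Assumptions~\eqref{eqn:assumption1}--\eqref{eqn:assumption2}. The one place you diverge is in handling $\psi_j$. You treat $\psi_j\le \alpha+\alpha_j/(1-\alpha_j)$ as an inequality extracted from the mFDR constraint~\eqref{eqn:mfdr_control}, and then worry about the product of two one-sided bounds. The paper instead takes $\psi_j=\alpha+\alpha_j/(1-\alpha_j)$ as the \emph{definition} of the Foster--Stine reward (equivalently, it writes the increment directly as $R_j\alpha-(1-R_j)\alpha_j/(1-\alpha_j)$), so no second inequality ever enters. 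This also simplifies your tightness paragraph: you appeal to the \gls{ero} rule to force $\psi_j$ onto the boundary, but in Foster--Stine investing $\psi_j$ already sits there by construction, so tightness in the simple-null/simple-alternative case follows immediately from the exact size and power equalities without invoking \gls{ero} at all.
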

\begin{proof}
    Proof is provided in Appendix~\ref{sec:proof}.
\end{proof}
We are now in a position to understand dynamical properties of the expected value of the sequence of $\alpha$-wealth, $\{W(j) : j \in \mathbb{N}\}$.
\begin{theorem}[Submartingale $\alpha$-Wealth]\label{theorem:long-term-alpha-wealth-scenario1}
Given a simple null and alternative $\Theta_j=\{0,\bar{\theta}_j\}$,   $\{W(j) : j \in \mathbb{N}\}$ is submartingale  (stochastically non-decreasing) with respect to $\{R_1,\dots, R_{j-1}\}$ if
	\begin{equation}\label{eq:nondecreasing1}
		\rho_j\geq\frac{\alpha_j/(1-\alpha_j)}{\alpha+\alpha_j/(1-\alpha_j)}\frac{1}{1-q_{j}}.
	\end{equation} 
\end{theorem}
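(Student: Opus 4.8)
The plan is to reduce the submartingale claim to a single pointwise inequality on the one-step conditional increment of the wealth, and then read that inequality off Lemma~\ref{lemma:fs-alpha-bound}. Because the Foster--Stine decision rule is a deterministic function of $\{R_1,\dots,R_{j-1}\}$, the process $\{W(j)\}$ is adapted to $\mathcal{F}_{j-1}=\sigma(R_1,\dots,R_{j-1})$ and is bounded, hence integrable; so establishing that it is a submartingale amounts to verifying $\mathbb{E}\!\left[W(j)\mid \mathcal{F}_{j-1}\right]\ge W(j-1)$ for every $j$, i.e. $\mathbb{E}^{j-1}[W_j]\ge 0$.

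Next I would invoke Lemma~\ref{lemma:fs-alpha-bound} in the simple two-point regime $\Theta_j=\{0,\bar\theta_j\}$, where the bound \eqref{eqn:fs-bound} holds with equality. Writing $\varphi_j=\alpha_j/(1-\alpha_j)$ for the Foster--Stine cost, this gives
\[
\mathbb{E}^{j-1}[W_j] \;=\; -\varphi_j \;+\; \bigl[\rho_j-(\rho_j-\alpha_j)q_j\bigr]\,(\alpha+\varphi_j),
\]
so the submartingale condition $\mathbb{E}^{j-1}[W_j]\ge 0$ is equivalent to $\bigl[\rho_j-(\rho_j-\alpha_j)q_j\bigr](\alpha+\varphi_j)\ge \varphi_j$. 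I would then decompose $\rho_j-(\rho_j-\alpha_j)q_j=\rho_j(1-q_j)+\alpha_j q_j$ and discard the nonnegative term $\alpha_j q_j$, obtaining the sufficient condition $\rho_j(1-q_j)(\alpha+\varphi_j)\ge \varphi_j$. Since $1-q_j>0$ and $\alpha+\varphi_j>0$, dividing through yields exactly $\rho_j\ge \dfrac{\alpha_j/(1-\alpha_j)}{\alpha+\alpha_j/(1-\alpha_j)}\cdot\dfrac{1}{1-q_j}$, which is hypothesis \eqref{eq:nondecreasing1}. As this holds for all $j$, the sequence $\{W(j)\}$ is submartingale with respect to $\{R_1,\dots,R_{j-1}\}$.

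I do not expect a substantive obstacle: the real content is Lemma~\ref{lemma:fs-alpha-bound} plus the observation that the bound is tight in the simple two-point case. The one point that must be handled with care is the direction of the inequality --- in general Lemma~\ref{lemma:fs-alpha-bound} only upper-bounds $\mathbb{E}^{j-1}[W_j]$, which is useless for a submartingale argument, so the assumption $\Theta_j=\{0,\bar\theta_j\}$ is doing essential work by promoting the bound to an equality. A minor bookkeeping caveat is that the division step implicitly uses $q_j<1$, and that the stated condition is only sufficient, not necessary, because of the discarded term $\alpha_j q_j(\alpha+\varphi_j)$; retaining it would give the sharp threshold $\rho_j-(\rho_j-\alpha_j)q_j\ge \varphi_j/(\alpha+\varphi_j)$.
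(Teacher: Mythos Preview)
Your proposal is correct and follows essentially the same route as the paper: invoke Lemma~\ref{lemma:fs-alpha-bound} with equality in the simple two-point case, rewrite the nonnegativity of the conditional increment as $M_j:=\rho_j-(\rho_j-\alpha_j)q_j\ge \varphi_j/(\alpha+\varphi_j)$, and then use $M_j\ge \rho_j(1-q_j)$ to obtain the stated sufficient condition. Your added remarks about adaptedness/integrability, the necessity of the two-point assumption for the inequality direction, and the sufficiency-only nature of \eqref{eq:nondecreasing1} are all accurate refinements of what the paper presents.
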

\begin{proof}
    Proof is provided in Appendix~\ref{sec:proof}.
\end{proof}
Theorem~\ref{theorem:long-term-alpha-wealth-scenario1} shows that one will be able to conduct an infinite number of tests in the long-term if the power is close to one or the prior probability of the null is close to zero. 
This scenario may occur when the hypothesis stream contains a large proportion of true alternative hypotheses, or if the sample sizes of the individual tests are large.

\begin{theorem}[Supermartingale $\alpha$-Wealth]\label{theorem:long-term-alpha-wealth-scenario2}
For any null and alternative hypothesis, $\{W(j) : j \in \mathbb{N}\}$ is supermartingale (stochastically non-increasing) with respect to $\{R_1,\dots, R_{j-1}\}$ if
 	\begin{equation}\label{eq:nonincreasing1}
 \rho_j \leq \left(\frac{\alpha_j/(1-\alpha_j)}{\alpha+\alpha_j/(1-\alpha_j)}-q_{j}\right)\frac{1}{1-q_{j}}.
 \end{equation} 
\end{theorem}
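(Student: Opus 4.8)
The plan is to reduce the supermartingale property to the sign of the conditional increment $\mathbb{E}\!\left[W(j)-W(j-1)\mid R_1,\dots,R_{j-1}\right]$ and then pin down that sign with the upper bound already established in Lemma~\ref{lemma:fs-alpha-bound}. Because $W(j)$ is obtained deterministically from $W(j-1)$ given the outcome $R_j$ and the rule $\mathcal{I}$ (and $\mathcal{I}$ is itself a function of $R_1,\dots,R_{j-1}$), this conditional expectation coincides with the quantity $\mathbb{E}^{j-1}[W_j]$ of the lemma, so $\{W(j):j\in\mathbb{N}\}$ is a supermartingale with respect to $\{R_1,\dots,R_{j-1}\}$ exactly when $\mathbb{E}^{j-1}[W_j]\le 0$ for every $j$. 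The key point making this work ``for any null and alternative hypothesis'' is that the inequality in Lemma~\ref{lemma:fs-alpha-bound} is valid for arbitrary $\Theta_j$ — only its \emph{tightness} is special to the simple-versus-simple case — so, unlike Theorem~\ref{theorem:long-term-alpha-wealth-scenario1}, we do not need to restrict $\Theta_j$.

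Next I would write $\varphi_j=\alpha_j/(1-\alpha_j)$ for the Foster--Stine cost and substitute the right-hand side of \eqref{eqn:fs-bound} into the requirement $\mathbb{E}^{j-1}[W_j]\le 0$. Using $\rho_j-(\rho_j-\alpha_j)q_j=\rho_j(1-q_j)+\alpha_j q_j$ and dividing by the strictly positive factor $\alpha+\varphi_j$, the requirement becomes the clean sufficient condition
\[
\rho_j(1-q_j)+\alpha_j q_j \ \le\ \frac{\varphi_j}{\alpha+\varphi_j}.
\]
The remaining task is to show that the hypothesis \eqref{eq:nonincreasing1} of the theorem implies this inequality. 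Since any test level satisfies $0\le\alpha_j\le 1$, we have $\alpha_j q_j\le q_j$, so it suffices to have $\rho_j(1-q_j)+q_j\le \varphi_j/(\alpha+\varphi_j)$, i.e.
\[
\rho_j \ \le\ \Bigl(\tfrac{\varphi_j}{\alpha+\varphi_j}-q_j\Bigr)\frac{1}{1-q_j},
\]
which is precisely \eqref{eq:nonincreasing1} (and, since $\rho_j\ge 0$, this condition implicitly requires $q_j\le\varphi_j/(\alpha+\varphi_j)$, so it is non-vacuous only when the prior mass on the null is not too large). Chaining these implications back up yields $\mathbb{E}^{j-1}[W_j]\le 0$ for all $j$, which is the supermartingale claim.

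I do not expect a genuine computational obstacle here; the proof is essentially a one-line rearrangement of Lemma~\ref{lemma:fs-alpha-bound} followed by the bound $\alpha_j\le 1$. The only points that warrant care are (i) the bookkeeping around the conditioning — being explicit that conditioning on $W(j-1)$ in the lemma is, for the purposes of the supermartingale definition, the same as conditioning on the full history $R_1,\dots,R_{j-1}$, since the rule and hence $(\varphi_j,\alpha_j,\rho_j)$ are measurable with respect to it — and (ii) making it transparent that the stated condition is a slightly stronger but tidier \emph{sufficient} condition, obtained by replacing $\alpha_j$ with $1$, rather than the exact break-even condition that one would get (as an equality) in the simple-versus-simple case.
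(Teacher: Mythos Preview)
Your proof is correct and, in fact, more direct than the paper's. Both arguments start from Lemma~\ref{lemma:fs-alpha-bound} and reduce the supermartingale claim to the inequality $M_j:=\rho_j(1-q_j)+\alpha_j q_j\le \dfrac{\varphi_j}{\alpha+\varphi_j}$ with $\varphi_j=\alpha_j/(1-\alpha_j)$, but they then simplify differently. You use the elementary bound $\alpha_j\le 1$ to replace $\alpha_j q_j$ by $q_j$, which immediately yields the theorem's stated condition on $\rho_j$. The paper instead reparametrizes $\rho_j=s_j\varphi_j$, uses the (equally elementary) bound $\rho_j-\alpha_j\ge (s_j-1)\varphi_j$ (equivalent to $\alpha_j^2\ge 0$) to obtain $M_j\le \varphi_j\bigl(s_j(1-q_j)+q_j\bigr)$, and ends with the sufficient condition $s_j\le\bigl(\tfrac{1}{\alpha+\varphi_j}-q_j\bigr)/(1-q_j)$. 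Translated back to $\rho_j$, the paper's condition is $\rho_j\le\bigl(\tfrac{\varphi_j}{\alpha+\varphi_j}-\varphi_j q_j\bigr)/(1-q_j)$, which differs from \eqref{eq:nonincreasing1} by $\varphi_j q_j$ versus $q_j$; your route lands exactly on the statement as written. Your remarks about conditioning (point (i)) and about \eqref{eq:nonincreasing1} being a tidy sufficient condition obtained by coarsening $\alpha_j\mapsto 1$ (point (ii)) are both apt and worth keeping.
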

\begin{proof}
    Proof is provided in Appendix~\ref{sec:proof}.
\end{proof}

Theorem~\ref{theorem:long-term-alpha-wealth-scenario2} shows that the generalized $\alpha$-investing testing procedure will end in a finite number of steps if the power of the test is close to zero or the prior probability of the null hypothesis is close to one. 
This scenario may occur when the hypothesis stream is made up of a large proportion of true null hypotheses, or if the sample sizes used for each test results in an under powered test.

These theorems provide general insights for understanding when the $\alpha$-wealth can be expected to be (stochastically) non-decreasing or non-increasing.
The non-decreasing $\alpha$-wealth sequences require that $\rho_j \uparrow 1$ for a fixed $\bar{\theta}_j$ which, in the case of a Gaussian, would require $\sigma_j/\sqrt{n} \rightarrow 0$ or $n \rightarrow \infty$.
So, the $\alpha$-wealth grows unbounded if the sample size is unbounded.
This theory in combination with the premise of non-trivial experiment costs motivates the need for methods for cost-aware $\alpha$-investing when the sample size is not fixed.

If the sequence of hypotheses is under the control of the investigator, a strategy they might employ is to select many hypotheses that are likely to be rejected early so that a large amount of $\alpha$-wealth can be accumulated and then spent later on hypotheses that are less likely to be rejected, but are still of interest to the investigator.
This phenomenon is generally called piggybacking.
However, the issue with this strategy is that an investigator behaves differently if the difficult hypothesis is the first in the sequence of tests versus if the difficult hypothesis presents after a long sequence of easy tests. 
If the sequence of tests is not under the control of the investigator, they may still find themselves in a similar situation merely by a fortunate random ordering of the tests.
In Figure~\ref{fig:piggy_backing} shows that in ERO investing one can accumulate a large amount of $\alpha$-wealth and distort the expenditure of wealth for difficult tests that are subsequent to easy tests.
ERO investing rejects several true nulls while cost-aware ERO (CAERO) (Section~\ref{sec:caero}) does not exhibit such aggressive testing behavior.
Our premise, in this work, is that the investigator should be indifferent, in expectation, as to the position of the difficult hypothesis in the sequence of tests.

\begin{figure*}
    \centering
    \includegraphics[width = \textwidth]{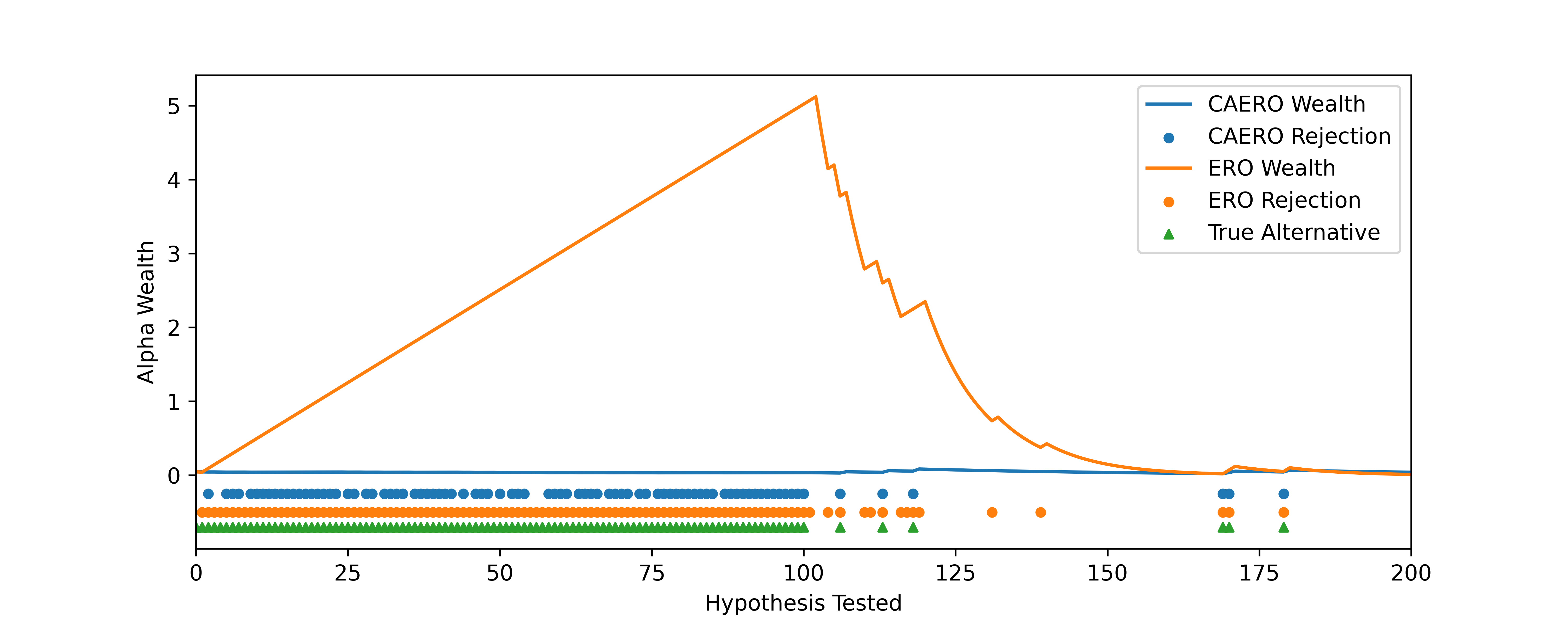}
    \caption{An example of piggybacking in an individual experiment where $q_j = 0.01$ for the first 100 hypotheses, and $q_j = 0.95$ for the remaining $100$ hypotheses. ERO investing with a relative spending scheme makes 8 false rejections following change in $q_j$, while CAERO (Section~\ref{sec:caero}) does not.}
    \label{fig:piggy_backing}
\end{figure*}

\section{COST-AWARE GENERALIZED $\alpha$-INVESTING}
\label{sec:caero}
      

In this section, our development derives from two key differences in assumptions compared to previous work.
First, the $\alpha$-cost of a hypothesis test, $\varphi_j$, should account for the a-priori probability that the null hypothesis is true as well as the pattern of previous rejections. 
The value of  $\varphi_j$ dictates bounds on $\alpha_j$ and  $\rho_j$, which, in combination with $q_j$, directly impact the behavior of $W_\alpha$. 
Consequently, setting $\varphi_j$ has an impact on the level of future tests. 
In Section~\ref{sec:varphi}, we extend the ERO problem from \citet{aharoni2014generalized} to include $\varphi_j$ in the optimization problem. 
Following this, we assume that the per sample monetary cost to conduct hypothesis tests is not trivial, motivating the need to include the sample size of a test, $n_j$, in the optimization. 
This extension is detailed in Section~\ref{sec:include-n}. 
In Section~\ref{sec:caero}, we present these extensions as a single decision rule, and extend this rule to a finite horizon in Section~\ref{sec:finite-horizon}.

\subsection{Optimizing $\varphi_j$}\label{sec:varphi}

A key contribution of this work is a procedure for selecting the amount of $\alpha$-wealth to commit to a given hypothesis test, $\varphi_j$.
\citet{aharoni2014generalized} leave $\varphi_j$ up to the investigator and provide several ways of selecting it: constant, relative, and relative-200.
Given a value of $\varphi_j$, the variables $\alpha_j$, $\rho_j$, and $\psi_j$ are chosen such that the expected reward, $\mathbb{E}_\theta(R_j)\psi_j$ is maximized. 
In their simulation studies, the choice of $\varphi_j$ is such that under the data-generating process one is expected to see one true alternative hypothesis before $\alpha$-death.
This section develops a principled method of setting $\varphi_j$ via a strategy for a two-player zero-sum game between the investigator and nature.

\begin{table}[t]
    \centering
\begin{tabular}{c c||c|c}
    \multicolumn{2}{c}{ } & \multicolumn{2}{c}{Player II (Nature)} \\ 
      &  & $\theta_j \in H_j$ & $\theta_j \not \in H_j$ \\
         \hline
         \hline
       Player I & Conduct Test &  $-\varphi_j + \alpha_j\psi_j$ & $-\varphi_j + \rho_j\psi_j$ \\ \cline{2-4}
        (investigator) & Skip Test & 0 & 0
    \end{tabular} \\
      \caption{Payoff matrix for posing hypothesis testing as a game against nature. The payouts shown are the expected value of the payout for a given pair of pure strategies.}
    \label{tab:payoff}
\end{table}
Suppose that we have a zero-sum game involving two players: the investigator (Player I) and nature (Player II). The investigator has two strategies - to test or to not test a hypothesis.
Nature, independent of the investigator, chooses to hide $\theta_j \in H_j$ with probability $q_j$ and $\theta_j \not \in H_j$ otherwise.
The utility function for this game is the change in $\alpha$-wealth.
The payoff matrix for the game is provided in Table \ref{tab:payoff}.

If the investigator chooses not to conduct the test, there is no cost ($\varphi_j=0$) and there is no reward ($\psi_j=0$) regardless of what nature has chosen.
So, the change in $\alpha$-wealth when not conducting a test is zero.
If the investigator chooses to conduct a test, and nature has hidden $\theta_j \in H_j$, then the payout is $-\varphi_j$ with probability $1-\alpha_j$, or $-\varphi_j + \psi_j$ with probability $\alpha_j$. 
In expectation, this payout is $-\varphi_j + \alpha_j \psi_j$. 
Similarly, if the investigator chooses to conduct the test and nature has hidden $\theta_j \not \in H_j$, then the expected payout is $-\varphi_j + \rho_j \psi_j$. 
The mixed strategy of nature is known to be $(q_j, 1 - q_j)$. 
What remains to be determined in this game are the unknowns in the payoff matrix, as well as the investigator's strategy. 
We choose to set the payoffs such that the expected change in $\alpha$-wealth is identical for both of the investigator's strategies. 
By designing the payoff matrix conditioned on nature's mixed strategy, such that the investigator has the same expected payoff for both pure (and any mixed) strategies, the investigator's choice to test or not test a hypothesis has no effect (in expectation) on the ability to perform future tests.
With these properties, nature is employing an \emph{equalizing} strategy.

The result is the investigator is indifferent as to whether to test or not test and the expected payoff is
\begin{equation} \label{eq:martingale}
q_j(-\varphi_j + \alpha_j \psi_j) + (1-q_j) (-\varphi_j + \rho_j \psi_j) = 0.
\end{equation}
Since the expected payoff for not testing is 0, this equation ensures that the expected change in $\alpha$-wealth when testing is also 0. By definition, this gives a self contained decision rule that makes $\alpha$-wealth martingale, striking a balance between the two scenarios given in Theorem \ref{theorem:long-term-alpha-wealth-scenario1} and Theorem \ref{theorem:long-term-alpha-wealth-scenario2}. 

\begin{theorem}[Martingale $\alpha$-Wealth]\label{theorem:long-term-alpha-wealth-martingale}
Given a simple null and alternative $\Theta_j=\{0,\bar{\theta}_j\}$,   $\{W(j) : j \in \mathbb{N}\}$ is martingale with respect to $\{R_1,\dots, R_{j-1}\}$ if
	\begin{equation}\label{eq:martingale_thm}
    \rho_j = \left( \frac{1}{1-q_j} \right) \left( \frac{\varphi_j}{\psi_j} - q_j \alpha_j\right).
	\end{equation} 
\end{theorem}
\begin{proof}
    Proof is provided in Appendix~\ref{sec:proof}.
\end{proof}
Theorem \ref{theorem:long-term-alpha-wealth-martingale} provides a condition on the power of the test which requires a balance between the ratio of $\varphi_j$ and $\psi_j$ and the probability of a false positive.

Allowing $\varphi_j$ to be a free variable in the optimization problem leads us to an infinite number of ERO-class solutions. 
Selecting the maximum of this class would lead to aggressive play, and in many situations leads to \emph{betting the farm} or \emph{bold play}. 
The martingale constraint, \eqref{eq:martingale}, reduces the solution space to a single nontrivial solution and a trivial solution where $\varphi_j = \psi_j = \alpha_j = 0$. 
Furthermore, the fact that the expected reward is constrained to be zero by \eqref{eq:martingale} means that the ERO optimization problem is a feasibility problem.
Even so, we retain the objective function in the optimization problem in anticipation for the next section where we allow for variable sample sizes $n_j$.

The investigator may additionally wish to limit the variance of their wealth process. 
This can be accomplished by setting an \emph{upper bound} on the proportion of wealth an investigator may spend for an ante.
Furthermore, the investigator may wish to \emph{lower bound} the power of an individual test.
In order to satisfy these conditions, the investigator will collect more samples for an individual hypothesis in order to meet their power requirement.
We choose to impose a lower bound constraint on the power for this reason.

\subsection{Optimizing $n_j$}\label{sec:include-n}
In the previous section $n_j$ was held fixed; we now consider $n_j$ as a free variable in our optimization problem. 
As a result, $\rho_j$ is no longer completely determined by $\alpha_j$ and one can increase $\rho_j$ via increasing $n_j$.
In scientific applications, the investigator is also constrained by sample collection costs, and would not wish to spend excessively on a single hypothesis test.
We modify the generalized $\alpha$-investing decision rule, \eqref{eqn:alpharule}, to include a notion of dollar-wealth $W_{\$}(j)$ available for expenditure to collect data to test the $j$-th hypothesis
\begin{equation}
    \label{eqn:augmented-decision-rule}
    (\varphi_j, \alpha_j, \psi_j, n_j) = \mathcal{I}(W_{\alpha}(0), W_{\$}(0)) ( \{ R_1, \ldots, R_{j-1} \}),
\end{equation}
where $n_j$ is the sample size allocated for testing of the $j$-th hypothesis.
A natural update for the dollar-wealth is
\begin{eqnarray}
    \label{eqn:dollar-wealth-update}
    W_{\$}(0) &=& B \\
    W_{\$}(j) &=& W_{\$}(j-1) - c_j n_j,
\end{eqnarray}
where $c_j$ is the per-sample cost for data to test the $j$-th hypothesis, and $B$ is the initial dollar-wealth.
Allowing the cost to vary with the hypothesis test enables one to model different experimental methods and cost inflation for long-term experimental plans.

Since $n_j$ is a free variable under the control of the investigator, we again have many solutions to the ERO problem.
Furthermore, as $n_j$ increases up to the allowable expenditure of sample resources, so does the power and therefore the expected reward.
Theoretically, the optimal solution allocates all the sample budget available even though the marginal increase in power and thus the expected reward, is vanishingly small for large $n_j$. 
To address this issue, we modify the objective function to include a small penalty for increasing sample size\footnote{We thank an anonymous reviewer for this suggestion.}, $-\lambda c_j n_j $ where $\lambda$ is chosen by the investigator.
The solution is fairly robust to the value of $\lambda$ and this reformulation provides for a unique optimal value.

\subsection{Cost-aware ERO Decision Rule}
The investigator's goal is to conduct as many tests as possible while rejecting as many true alternatives as possible and maintaining control of the false discovery rate.
Incorporating the methods for optimizing $\varphi_j$ and $n_j$ into the ERO problem yields a self-contained decision rule in the form of \eqref{eqn:augmented-decision-rule},
\begin{maxi!}|s|
{\varphi_j, \alpha_j, \psi_j, n_j}
{\mathbb{E}_{\theta}(R_j)\psi_j - \lambda c_j n_j}
{\label{eqn:opt2}}
{}
\addConstraint{\psi_j}{\leq \frac{\varphi_j}{\rho_j} + \alpha\label{eqn:opt2-fdrconst1}}
\addConstraint{\psi_j}{\leq \frac{\varphi_j}{\alpha_j} + \alpha - 1 \label{eqn:opt2-fdrconst2}}
\addConstraint{\frac{\varphi_j}{\rho_j}}{= \frac{\varphi_j}{\alpha_j} - 1 \label{eqn:opt2-eroconst}}
\addConstraint{\rho_j}{= 1 - \Phi \left( z_{1-\alpha_j} - \frac{\bar{\theta}_j }{\sigma_j/ \sqrt{n_j}} \right)\label{eqn:opt2-powerconst}}
\addConstraint{\rho_j}{\geq \rho_{\text{lo}}\label{eqn:opt2-powerconst_min}}
\addConstraint{\varphi_j}{\leq a  W_{\alpha}(j-1)\label{eqn:opt2-alphawealthconst}}
\addConstraint{n_j c_j}{\leq W_{\$}(j)\label{eqn:opt2-dollarconst}}
\addConstraint{0}{= \mathbb{E}\left[\Delta W_{\alpha}\right], \label{eqn:opt2-indifference}}
\end{maxi!}

Constraints~\eqref{eqn:opt2-fdrconst1} and \eqref{eqn:opt2-fdrconst2} ensure control over the mFDR.
Constraint~\eqref{eqn:opt2-powerconst} connects the level, power, and sample size of the test.
Constraints~\eqref{eqn:opt2-alphawealthconst} and \eqref{eqn:opt2-dollarconst} ensure the existing $(\alpha, \$)$-wealth is not exceeded. 
The parameter $a \in (0,1]$ controls the proportion of $\alpha$-wealth that a single test can be allocated.
Constraint~\eqref{eqn:opt2-indifference} ensures nature's strategy is equalizing. Written out explicitly,
$$\mathbb{E}\left[\Delta W_{\alpha}\right] = (-\varphi_j + \psi_j) \Pr[R_j=1] + (-\varphi_j) \Pr[R_j=0],$$
$$ \Pr[R_j=1] = \alpha_jq_j + \rho_j(1-q_j), $$
$$ \Pr[R_j=0] = (1-\alpha_j)q_j + (1-\rho_j)(1-q_j).$$
A pseudo-code algorithm of the full cost-aware ERO method and further extensions to cost-aware ERO are described in Appendix~\ref{sec:extensions}. 

\begin{lemma}
    The cost-aware ERO decision rule ensures that $\alpha$-wealth, $W_\alpha(j)$, is martingale with respect to $\{R_1,\dots, R_{j-1}\}$. 
\end{lemma}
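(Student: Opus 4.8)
The plan is to verify directly that the equalizing constraint~\eqref{eqn:opt2-indifference} is exactly the statement that the one-step conditional expectation of the $\alpha$-wealth increment vanishes, so that the martingale property is immediate.

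First I would write out the increment. By~\eqref{eqn:alpha-wealth-update}, $W_\alpha(j) = W_\alpha(j-1) - \varphi_j + R_j\psi_j$, so $\Delta W_\alpha := W_\alpha(j) - W_\alpha(j-1) = -\varphi_j + R_j\psi_j$. Since the decision rule~\eqref{eqn:augmented-decision-rule} outputs $(\varphi_j,\alpha_j,\psi_j,n_j)$ as a deterministic function of $\{R_1,\ldots,R_{j-1}\}$ and the fixed initial wealths, $W_\alpha(j)$ is a deterministic function of $R_1,\ldots,R_j$ (so the process is adapted), and all four quantities are constants once we condition on $\{R_1,\ldots,R_{j-1}\}$; hence
\begin{equation*}
\mathbb{E}\!\left[\Delta W_\alpha \mid R_1,\ldots,R_{j-1}\right] = -\varphi_j + \psi_j\,\Pr\!\left[R_j=1 \mid R_1,\ldots,R_{j-1}\right].
\end{equation*}
Next I would compute the conditional rejection probability by conditioning on nature's move: nature hides $\theta_j\in H_j$ with probability $q_j$ and $\theta_j\notin H_j$ otherwise, independently of the history, so the tower rule gives $\Pr[R_j=1\mid R_1,\ldots,R_{j-1}] = q_j\,\Pr[R_j=1\mid \theta_j\in H_j,\,R_1,\ldots,R_{j-1}] + (1-q_j)\,\Pr[R_j=1\mid \theta_j\notin H_j,\,R_1,\ldots,R_{j-1}]$. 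In the simple null/alternative regime $\Theta_j=\{0,\bar\theta_j\}$ the bounds~\eqref{eqn:assumption1}--\eqref{eqn:assumption2} are attained with equality --- this is exactly the tightness recorded in Lemma~\ref{lemma:fs-alpha-bound} --- so these two conditional probabilities equal $\alpha_j$ and $\rho_j$ respectively. Substituting yields $\mathbb{E}[\Delta W_\alpha \mid R_1,\ldots,R_{j-1}] = -\varphi_j + \psi_j(q_j\alpha_j + (1-q_j)\rho_j)$, which is precisely the expression for $\mathbb{E}[\Delta W_\alpha]$ written out beneath~\eqref{eqn:opt2-indifference}; feasibility of Problem~\ref{eqn:opt2} forces it to equal $0$. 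Equivalently, one can rearrange $\varphi_j = \psi_j(q_j\alpha_j+(1-q_j)\rho_j)$ into $\rho_j = (1-q_j)^{-1}(\varphi_j/\psi_j - q_j\alpha_j)$, which is condition~\eqref{eq:martingale}, and invoke Theorem~\ref{theorem:long-term-alpha-wealth-martingale} directly.

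To finish I would check integrability: constraints~\eqref{eqn:opt2-fdrconst1}, \eqref{eqn:opt2-fdrconst2} and~\eqref{eqn:opt2-alphawealthconst} with $a\in(0,1]$ keep $\varphi_j$ and $\psi_j$ uniformly bounded and $W_\alpha(j)$ bounded (in fact nonnegative and bounded above), so $\mathbb{E}|W_\alpha(j)|<\infty$ for every $j$; together with $\mathbb{E}[W_\alpha(j)\mid R_1,\ldots,R_{j-1}] = W_\alpha(j-1)$ this is the martingale property. The only steps that are not pure bookkeeping are (i) identifying the rule's reported $\alpha_j$ and $\rho_j$ with the true conditional rejection probabilities of $R_j$, which relies on the simple null/alternative assumption and Lemma~\ref{lemma:fs-alpha-bound} --- outside that regime~\eqref{eqn:assumption1}--\eqref{eqn:assumption2} are only inequalities and the conclusion weakens to supermartingale, consistent with Theorem~\ref{theorem:long-term-alpha-wealth-scenario2} --- and (ii) confirming that Problem~\ref{eqn:opt2} is feasible at every step so that a rule satisfying~\eqref{eqn:opt2-indifference} exists at all, which I would handle by exhibiting the trivial ``skip the test'' point $\varphi_j=\psi_j=0$ (with $\alpha_j,n_j$ chosen to meet~\eqref{eqn:opt2-eroconst}--\eqref{eqn:opt2-powerconst}), which always lies in the feasible set. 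I expect point (i) to be the conceptual crux; the rest is algebra.
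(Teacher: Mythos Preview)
Your main argument is correct and is exactly the paper's approach, only spelled out in much greater detail: the paper's entire proof is the single sentence ``Constraint~\eqref{eqn:opt2-indifference} sets $\alpha$-wealth to be martingale by definition,'' and you have unpacked precisely why that constraint is the conditional-increment-zero condition.

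One small slip in your side-remark on feasibility: the ``skip the test'' point $\varphi_j=\psi_j=0$ does \emph{not} lie in the feasible set of Problem~\ref{eqn:opt2}, because constraint~\eqref{eqn:opt2-eroconst} reads $\varphi_j/\rho_j=\varphi_j/\alpha_j-1$, and at $\varphi_j=0$ this becomes $0=-1$. Feasibility of Problem~\ref{eqn:opt2} therefore needs a different witness (or should simply be assumed as part of the lemma's hypothesis, which is effectively what the paper does). This does not affect the martingale argument itself, which is conditional on a feasible solution existing.
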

\begin{proof}
    Constraint~\eqref{eqn:opt2-indifference} sets $\alpha$-wealth to be martingale by definition.
\end{proof}
Constraint~\eqref{eqn:opt2-indifference} sets the expected change in $\alpha$-wealth equal to zero. 
This enforces that $W_{\alpha}(j)$ is martingale.
Allowing $W_{\alpha}(j)$ to be submartingale, as per Theorem \ref{theorem:long-term-alpha-wealth-scenario1}, can lead to a situation where hypotheses are tested at high $\alpha$-levels due to the accumulated $W_\alpha$ from previous rejections. This is referred to as piggybacking in the literature when such accumulated wealth leads to poor decisions \citep{ramdas2017online}.
On the other hand, allowing $W_\alpha(j)$ to be supermartingale, as per Theorem \ref{theorem:long-term-alpha-wealth-scenario2}, causes the testing to end, and is referred to as $\alpha$-death in the literature.
Using a game-theoretic formulation allows us to propose an expected-reward optimal procedure which considers preventing $\alpha$-death and piggy-backing. 

Constraint~\eqref{eqn:opt2-indifference} only controls the \emph{expected} increment in $W_{\alpha}$. 
It is well known that martingale-based strategies can suffer from what is known as \emph{gambler's ruin}. 
Since no bounds are set on the worst case scenario, which in this case is when $R_j = 0$, it is possible that we could set $\varphi_j = W_{\alpha}(j-1)$, and suffer $\alpha$-death. 
This occurs, for example, when $q_j$ is close to $0$, and $\mu_A$ and $\sigma_j$ allow for $\rho_j \to 1$.
In such a case, a rejection is almost certain, and in turn, so is receiving the reward $\psi_j$. 
Recall that we restrict ourselves to an ERO solution, and thus, we can interpret constraint~\eqref{eqn:opt2-indifference}, without the factor $a$, as setting $\varphi_j$ to the expected reward - the quantity that we are maximizing.
In order to keep $W_{\alpha}$ martingale, this almost guaranteed upside must be counteracted by a devastating downside.
In order to avoid $\alpha$-death, we add a factor which limits the maximal bet, preventing the investigator from \emph{betting the farm}. 
In simulation studies, we found that setting $a = 0.025$ gave good results. We require $\rho_j \geq 0.9$ when $n$ is not constrained.  

\subsection{Finite horizon cost-aware ERO $\alpha$-investing}\label{sec:finite-horizon}

\begin{figure*}[t]
  \centering
    \includegraphics[width=0.9\textwidth]{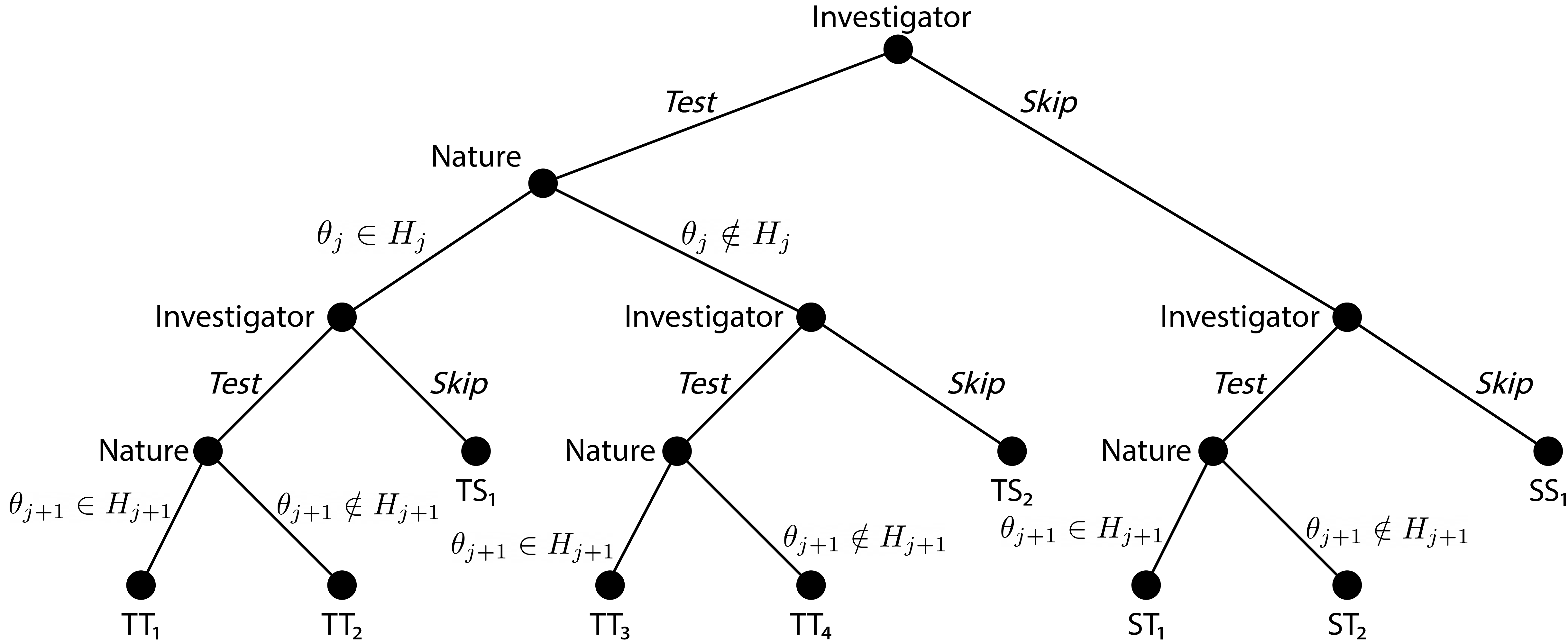}
  \caption{Extensive form of two-step game between Investigator (Player I) and the Nature (Player II). Strategies for each player are italicized. The leaves are labeled to denote the strategy taken by the investigator and are enumerated for equations presented in Appendix~\ref{sec:two-step}.}
  \label{fig:two-step-extensive}
\end{figure*}

The standard ERO framework optimizes only the one-step expected return, $\mathbb{E}_\theta (R_j)\psi_j$.
But, when tests are expensive, it is logical to consider the expected return after two (or more) tests.
We consider $q_{j}$ to be known, and extend the game theoretic framework to a finite horizon of decisions.
The extensive form of the game between nature, who hides $\theta_j$ in the null or alternative hypothesis region, and the investigator, who seeks to find $\theta_j$ and gain the reward for doing so, is shown in Figure \ref{fig:two-step-extensive}.
We note that sequential two-step cost-aware ERO investing is a different problem than batch ERO investing because two-step investing accounts for the expected change in $(\alpha,\$)$-wealth after each step while batch cost-aware ERO only received the payoff at the conclusion of all of the tests in the batch.

The two-step objective function is
\begin{equation}
\begin{aligned}
    \mathbb{E} ( R_j\psi_j &+ R_{j+1} \psi_{j+1} ) = \mathbb{E}(R_j)\psi_j +
     \psi_{j+1} [ P(R_j=0)\mathbb{E}(R_{j+1}|R_j = 0) + P(R_j=1)\mathbb{E}(R_{j+1}|R_j = 1)]
    \end{aligned}
\end{equation}
with constraints~\eqref{eqn:opt2-fdrconst1}-\eqref{eqn:opt2-indifference} from Problem~\ref{eqn:opt2} remaining for steps $j$ and $j+1$.
Designing the game so that nature's strategy is an equalizing strategy results in a system of equations (Appendix~\ref{sec:two-step}) that form constraints in the ERO optimization problem. 
It is worth noting that such a game simplifies to the standard cost-aware ERO method defined in \ref{eqn:opt2} when $W_{\$} >>0$. 
This holds since the parameters of the second test depend on the expected $\alpha$-wealth available at that step. 
When the expected increment is $0$, as set in constraint~\eqref{eqn:opt2-indifference}, and when available $\$ $-wealth is not scarce, then each step is equivalent to optimization occurring on the first test.
When this constraint is lifted, or when the available $W_\$ $ is low, the finite horizon solution provides a different solution to the single step solution.

\section{SYNTHETIC DATA EXPERIMENTS}
\label{sec:synthetic-experiments}

\paragraph*{Experimental Settings} 
To compare our method with state-of-the-art related methods, we generate synthetic data as described in \citet{aharoni2014generalized}. 
The synthetic data is composed of $m=1000$ possible hypothesis tests.
For the $j$-th test, the true state of $\theta_j$ is set to the null value of $0$ with probability $q_j \sim \text{Unif}(0.85,0.95)$ and otherwise set to $2$.
A set of $n_j = 1000$ potential samples $(x_{ji})_{i=1}^{n_j}$ were generated i.i.d from a $\mathcal{N}(\theta_j,1)$ distribution.
For each hypothesis test, the z-score was computed as  $z_j = \sqrt{n_j^*} \sum_{i=1}^{n_j^*} x_{ji}$, where $n_j^*$ is described in the table and the one-sided p-value is computed.
The methods were tested on $10,000$ realizations of this simulation data generation mechanism. Pseudo-code, as well as other implementation details, for this simulation can be found in Appendix \ref{sec:simulation-setup}.

\subsection{Comparison to state-of-the-art methods }
Table~\ref{tbl:sota} compares our method, cost-aware ERO, with related state-of-the-art $\alpha$-investing methods including: $\alpha$-spending~\citep{tukey1994collected}, $\alpha$-investing~\citep{foster2008alpha}, $\alpha$-rewards~\citep{aharoni2014generalized}, ERO-investing~\citep{aharoni2014generalized}, LORD~\citep{javanmard2018online, ramdas2017online}, and SAFFRON~\citep{ramdas2018saffron}.
The table is indexed by the allocation scheme (Scheme), and the reward method (Method). 
The allocation scheme determines the value of $\varphi_j$ at each step, which in many cases is left to user discretion. 
We implement the $\varphi$-allocation schemes proposed by \citet{aharoni2014generalized}.
The constant scheme simply allocates, 
\begin{equation*}
    \varphi_j = \min \left\{\frac{1}{10} W_\alpha(0), W_\alpha(j-1)\right\},
\end{equation*} 
for each test, the relative scheme allocates an amount that is proportional to the remaining $\alpha$-wealth,
\begin{equation*}
    \varphi_j = \frac{1}{10}W_\alpha(j-1)
\end{equation*}
and continues until $W_\alpha(j) < (1/1000) W_\alpha(0)$. 
The relative 200 scheme follows the same proportional steps as the relative, but always performs 200 tests~\citep{aharoni2014generalized}.
The results from our implementation of these methods matches or exceeds previously reported results.

\begin{table*}[]
    \centering
    \begin{tabular}{llrrrr}
\toprule
          &         &    Tests &  True Rejects &  False Rejects &  mFDR \\
Scheme & Method &          &               &                &       \\
\midrule
constant & $\alpha$-spending &    10.0 &   0.28 &  0.04 &  0.033\\
          & $\alpha$-investing &    16.0 &   0.44 &  0.07 &  0.045 \\
          & $\alpha$-rewards $k = 1$ &    14.6 &   0.40 &  0.06 &  0.043 \\
          & $\alpha$-rewards $k = 1.1$ &    16.3 &   0.43 &  0.06 &  0.043 \\
          & ERO investing &    18.9 &   0.53 &  0.08 &  0.051 \\
\midrule
relative & $\alpha$-spending &   66.0 &	0.55 &	0.04 &	0.028 \\
          & $\alpha$-investing &    81.8 &	0.87 &	0.09 &	0.045 \\
          & $\alpha$-rewards $k = 1$ & 81.1 &	0.85 &	0.08 &	0.043\\
          & $\alpha$-rewards $k = 1.1$ &    80.8 &	0.82 &	0.08 &	0.041 \\
          & ERO investing &   83.2 &	0.93 &	0.90 &	0.045\\
\midrule          
other & LORD++ &  1000.0 &	2.06 &	0.07 &	0.022\\	
          & LORD1 & 1000.0 &	1.46 &	0.03 &	0.014 \\
          & LORD2 &  1000.0	& 1.97 &	0.06 &	0.020 \\
          & LORD3 &  1000.0	& 1.99 &	0.07 &	0.024  \\
          & SAFFRON &  1000.0 &	1.28 &	0.07 &	0.031 \\
\midrule
  cost-aware & ERO $n_j = 1$ &   953.0 &   \textbf{4.23} &  0.12 &  0.023 \\
\midrule
\midrule
cost-aware & ERO $n_j^*$ &  225.7 &	\textbf{19.11} &	0.22 &	0.011 \\
other & LORD++ (n = 3) &  334.0 &	22.54 &	0.79 &	0.032\\	
\bottomrule
\end{tabular}
    \caption{Comparison of cost-aware $\alpha$-investing with state-of-the-art sequential hypothesis testing methods. Values for Tests, True Rejects and False Rejects are the average across 10,000 iterations, and these estimates are used for mFDR. All methods are constrained to use 1000 samples at most per iteration. 
    For comparison include LORD++ with the optimal sample size of  $n = 3$. However, the optimal sample size for LORD++ was selected by observing the number of true rejects for $n_j \in [1,10]$ and this information would not be available to an investigator. The optimal value of $n_j^*$ for cost-aware ERO, however, was predictable from the observed data.}
    \label{tbl:sota}
\end{table*} 

ERO investing yields more true rejects than $\alpha$-spending, $\alpha$-investing, and both $\alpha$-rewards methods.
The LORD variants and SAFFRON perform the maximum number of tests while maintaining control of the mFDR.
For the use scenarios considered in the LORD and SAFFRON papers (large-scale A/B testing), this is optimal --- tests are nearly free and the goal is to be able to keep testing while maintaining mFDR control.
The cost-aware ERO setting is different and more applicable to biological experiments where one aims to maximize a limited budget of tests to achieve as many true rejects as possible while controlling the mFDR.
Increasing the sample size capacity for each test enables cost-aware ERO to achieve higher power with fewer tests than the current state-of-the-art methods with $n = 1$.
For fair comparison, we varied $n$ for LORD++ and include the sample size which maximized the number of true rejections. 
This selection was performed \emph{after} running all considered sample sizes. 
It is important to note that the investigator would not have access to such information in a real experiment.
Releasing the restriction on sample size enables cost-aware ERO to allocate an adaptive number of samples based on the prior of the null as well as the available budget.
Appendix~\ref{sec:sota_q1} shows the comparisons for $q = 0.1$ and Appendix~\ref{sec:large_n} shows comparisons with all of the other methods set to $n_j=10$. 
Our cost-aware ERO method with $n=1$ performs more tests and rejects more false null hypotheses than all competing methods at $n = 1$. 

\begin{figure*}[]
  \centering
    \includegraphics[width=0.9\textwidth]{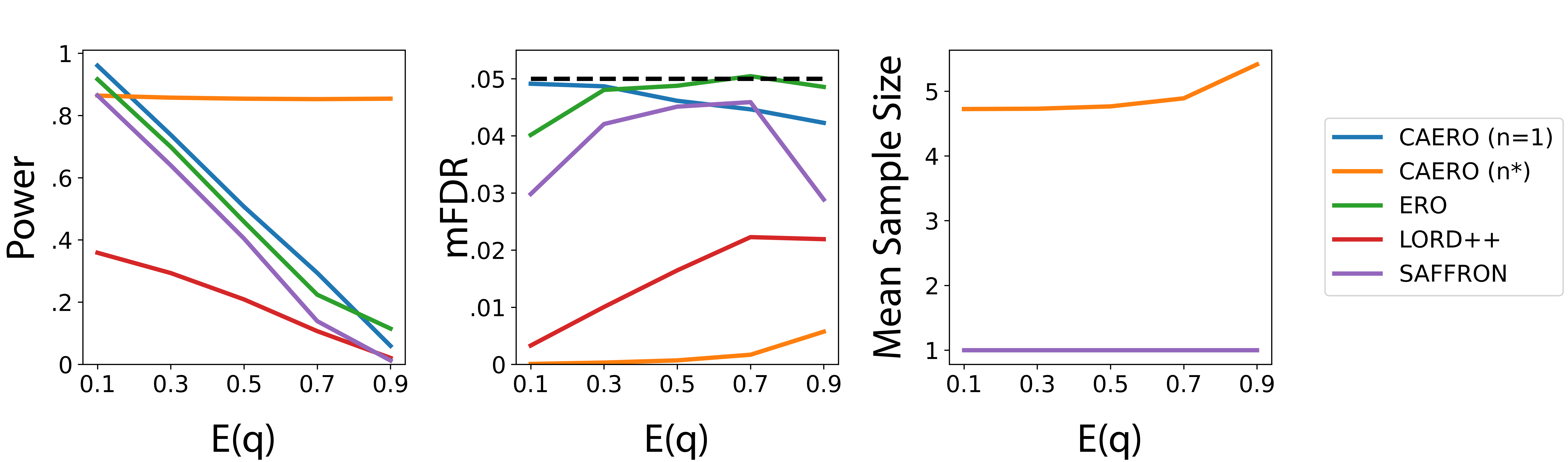}
  \caption{Power, mFDR, and mean number of samples per test for cost-aware ERO and existing methods ($n=1$) with random $q_{j} \sim \text{Beta}$.}
  \label{fig:simulation-results}
\end{figure*}

It is worth noting that ERO and cost-aware ERO with $n_j = 1$ are still quite different despite the restriction of sample size. 
We can view the difference in performance between these two methods as the benefit of allocating $\varphi_j$ using our game-theoretic framework.
Our decision rule incorporates our prior knowledge of the probability of the null hypothesis being true and aims to maintain $\alpha$-wealth (as a martingale). 
The experimental set up of \citet{aharoni2014generalized} implicitly leverages similar prior knowledge in the spending schemes proposed.
All spending schemes proposed in \citet{aharoni2014generalized} allow us to test at least one true alternative, in expectation, at which point the $\alpha$-wealth should increase. 
This increase in $\alpha$-wealth should then sustain testing until another true alternative appears. 
However, in the cost-aware ERO optimization problem, this information is explicitly accounted for, and helps us avoid situations described in Theorem \ref{theorem:long-term-alpha-wealth-scenario1} and Theorem \ref{theorem:long-term-alpha-wealth-scenario2}. 
By restricting $n_j = 1$, we have effectively limited our ability to inflate $\rho_j$ with a large sample size, and influence $W_\alpha (j)$ towards being submartingale. 
On the other hand, nature's equalizing strategy limits the expected payout to 0, by limiting the size of $\varphi_j$, preventing the experimenter from experiencing $\alpha$-death quickly, as seen in the constant spending scheme.

\subsection{Computation and Implementation}
In our experiments, for one set of $1,000$ potential hypothesis tests ERO investing, cost-aware, and finite-horizon cost-aware ERO all take $\sim 30$ seconds on a single 2.5GHz core and 16Gb RAM.
The nonlinear optimization problem was solved using CONOPT~\citep{drud1994conopt}.
Because the solver depends on initial values and heuristics to identify an initial feasible point, infrequently the solver was not able to find a local optimal solution; in these instances, the solver was restarted 10 times and if it failed on all restarts the iteration was discarded. 
Out of $10,000$ data sets at most $27$ iterations were discarded (for $n_j=1$).
Code to replicate these experiments is available at \url{https://github.com/ThomasCook1437/cost-aware-alpha-investing}.

\subsection{Random Prior of the Null Hypothesis}
\label{sec:random-q}

One of the benefits of incorporating a notion of sampling cost into the hypothesis testing problem is the ability to allocate resources based on the prior probability of the null, $q$.
We generated simulation data as previously described except the prior probability of the null hypothesis is selected at random from $q_{j} \sim \text{Beta}(a,b)$ where $a+b=100$ and with $2,500$ independent realizations of the data.
Appendix \ref{sec:simulation-setup} contains pseudo-code and further implementation details.
Figure~\ref{fig:simulation-results}(a-c) shows the power, mFDR, and mean number of samples per test as a function of $\mathbb{E}[q_{j}]$.
The results show that cost-aware ERO $\alpha$-investing achieves high power while maintaining control of the mFDR.
A key result of this experiment is that should it not be possible to collect as many samples as the optimization problem yields, the investigator may choose to not perform the test at all and instead wait for a test (with associated prior) that does yield an optimal sample size within the budget or may choose to allow the $\alpha$-wealth ante to adjust to the bound on the sample size.
This often occurs for large values of $q_j$, which we know by Theorem \ref{theorem:long-term-alpha-wealth-scenario2} will influence $W_\alpha(j)$ towards behaving as a supermartingale.
Cost-aware ERO will compensate by increasing $\rho_j$ through the sample size, $n_j$, and will expend the $W_{\$}$ available, as the optimization only considers a single step.
It is worth noting, that when $\mathbb{E}[q_{j}]$ is close to 1, cost-aware ERO with $n = 1$, maintains power better than other methods. 
This can be attributed to the allocation scheme that constraint~\eqref{eqn:opt2-indifference} creates. 
The value of $\varphi_j$ is kept small so that multiple false null hypotheses are tested at an appreciable level so that $\alpha$-wealth can be earned, and testing sustained. This setting is common in biological settings, where false null hypotheses can be rare.


\subsection{Sensitivity to the Prior}
Cost-aware ERO makes explicit the prior on the null, while the dependence on the probability of null hypotheses in the sequence is more implicit in other methods. 
So, an important question is, how sensitive is the method to misspecification of $q_j$.
To address this question, we consider two types of misspecification across the hypothesis sequence: variance with a correct expectation, and bias Appendix~\ref{sec:uncertainty-q}.
We find that cost-aware ERO is robust to variance in $q$ with a correct expectation. 
This is likely due to the the property that cost-aware ERO is relatively conservative in its allocation of $(\alpha, \$)$-wealth and the method has the opportunity to recover from losses due to misspecification.
However, the cost-aware ERO is sensitive to a biased specification of $q_j$.
If the true probability of the null is 0.9 on average and $q=0.85$ is used in cost-aware ERO, 273 fewer tests are performed compared to using the correct value of $q$.
Essentially, the downward bias in the assumed $q$ causes cost-aware ERO to be more aggressive in spending $\alpha$-wealth than it should be.
In practice, this effect could be mitigated, but ensuring that $\alpha$-wealth spending is conservative or by giving a margin of safety to the assumed value of $q_j$.
However, a more principled solution would employ a robust optimization formulation of cost-aware ERO or to implement online-learning for the $q$ process.
While this modification is outside of the scope of this paper, it is of great interest.

\subsection{Finite-Horizon Cost-aware ERO Investing}
To test whether extending the horizon of the reward to be maximized would enable better decisions as to $(\alpha, \$)$-wealth allocation, we varied the length of the horizon considered in the cost-aware ERO investing decision rules.
In general, the optimal values returned are identical between the decision rules. This is especially visible at the beginning of the testing process.
Discrepancies occur when $W_{\$}$ is sufficiently low such that repeatedly applying the one-step cost-aware ERO decision rule would expend all $W_{\$}$ prior to the final test in the finite horizon. 
This occurs when the finite-horizon is set to be a large number of steps or when the experiment is near the end of its funding.
We also noticed that our solver exhibited less stability as the length of the horizon increased.

\begin{figure*}[t]
  \centering
    \includegraphics[width=0.9\textwidth]{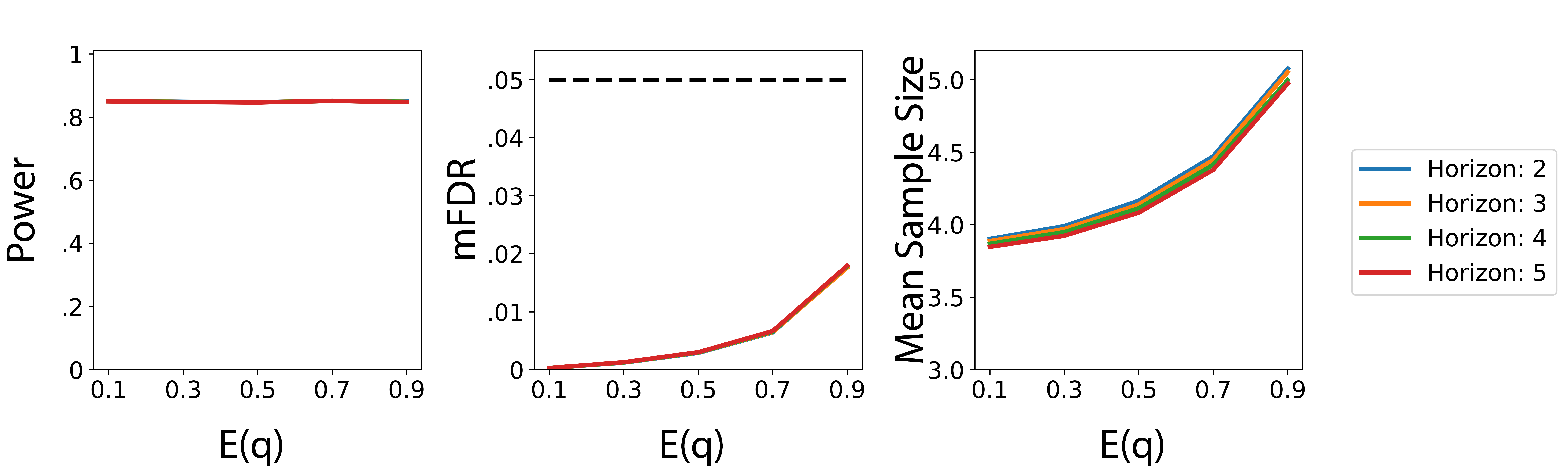}
  \caption{Power, mFDR, and mean number of samples per test for finite horizon cost-aware ERO with random $q_{j} \sim \text{Beta}$. A larger horizon corresponds to a greater number of future tests considered in the optimization process.}
  \label{fig:simulation-results-fh}
\end{figure*}

As seen in Figure \ref{fig:simulation-results-fh}, extending the horizon to include more tests results in the same allocation of samples. For $\mathbb{E}[q] = 0.9$, which we consider most applicable to biological applications, Table \ref{tbl:finite_horizon} confirms that the solutions for different horizons are identical, with discrepancies occurring due to computational constraints.

\begin{table*}[t]
    \centering
    \begin{tabular}{llrrrr}
\toprule
                   &    Tests &  True Rejects &  False Rejects &  mFDR \\
 Horizon &          &               &                &       \\
\midrule
 2 &    191.0 &   16.25 &  0.31 &  0.018\\
   3 &    187.0 &   15.90 &  0.30 &  0.018 \\
   4 &    181.0 &   15.37 &  0.30 &  0.018 \\
   5 &    177.3 &   15.04 &  0.29 &  0.018 \\

\bottomrule
\end{tabular}
    \caption{Varying the size of the finite horizon when $q_j \sim Beta(90,10)$. Values displayed correspond to the mean across 2,500 repetitions.}
    \label{tbl:finite_horizon}
\end{table*} 

 Note that this technique optimizes the parameters of each test based on the \emph{expected} wealth available at that time. The parameters set for future tests will never truly be attained. 
 These results demonstrate that our principled $(W_{\alpha}, W_{\$})$ spending strategy considering one step sufficiently captures the effect of the current test on our future tests. 
 The martingale constraint enables us to conduct tests so that future tests remain powerful, and we do not benefit from adding additional information to our optimization problem. 
 These results simultaneously suggest that an extended horizon may be appropriate for contexts where the optimization objective is not restricted to the expected reward or where the martingale constraint is not set for each individual test.

\section{REAL DATA EXPERIMENTS}
\label{sec:real-experiments}
Biological experiments are typically such that the sample costs are non-trivial, the proportion of false null hypotheses is small, and the number of overall tests is large. 
Our methods were compared to the ERO method on two gene expression data sets. 
The results show that the cost-aware method performs more tests and rejections, while spending fewer samples than a method which does not have the capability to adapt the sample size. 
As there is no ground-truth for these data sets, we are unable to compare the number of true rejections.

\subsection{Prostate Cancer Data}
Gene expression data was collected to investigate the molecular determinants of prostate cancer~\citep{dettling2004bagboosting}. 
The data set contains 50 normal samples and 52 tumor samples and each sample is a $m=6033$ vector of gene expression levels.
The data set has been normalized and log-transformed so that the data for each gene is roughly Gaussian. 
Let the empirical mean and standard deviation of the log-transformed normal samples be denoted $\hat{\mu}_j$ and $\hat{\sigma}_j$ respectively and let the log-transformed tumor data be denoted $x_j \in \mathbb{R}^{52}$.
The goal is to test whether the tumor gene expression is increased relative to the normal samples.

We use this data set to simulate a sequential testing scenario across genes. 
To estimate a prior for the null hypothesis for each gene, a logistic function was fit to only the first two samples, $\hat{q}_j = 1 - \left( 1 + \exp(-\beta ([\bar{x_j}]_{1:2} - x_0) )\right)^{-1}$, where $x_0 = \log_{10}(4)/\sigma$ and $\beta = 2$; these first two samples were then removed from the data set.
The order of the genes was permuted randomly and the cost-aware decision function was computed for each gene in sequence with $\hat{q}_j$ as described and $\bar{\theta}_j = \log_{10}(2)/\hat{\sigma}_j$.
We compared cost-aware ERO to ERO investing with the maximum number of samples available ($n=50$) and with a $n=3$ because a typical default replication level in biological experiments is to conduct experiments in triplicate.
For both procedures $c_j=1, \forall j$, and $W_{\$}[0] = 1000$. 
Pseudo-code and implementation details for this experiment can be found in Appendix \ref{sec:simulation-setup}.

\begin{figure*}[t]
\centering
    \includegraphics[width=\textwidth]{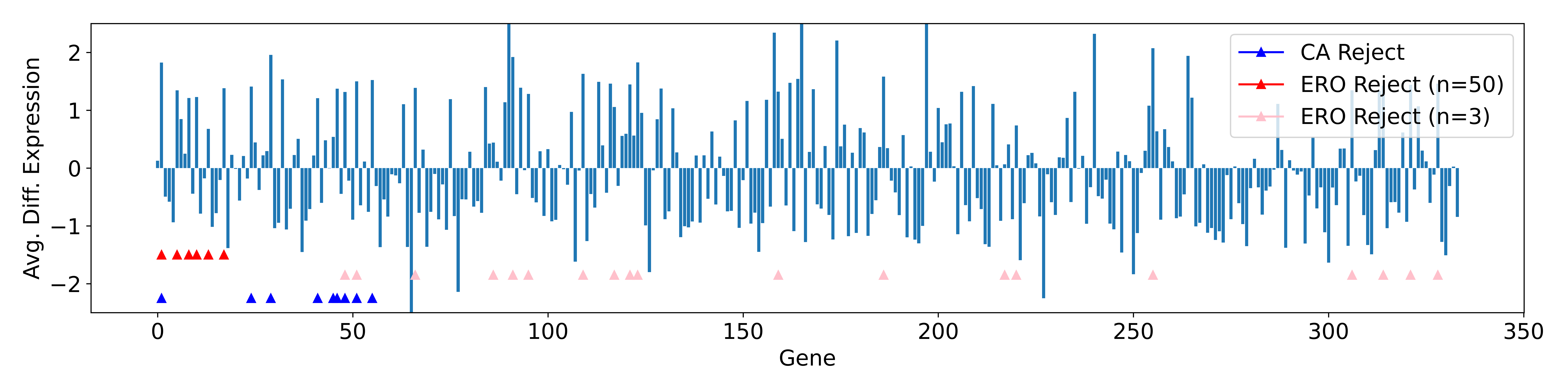}
\caption{Comparison of Cost-aware ERO investing and ERO investing for a prostate cancer gene expression data set. 
ERO ($n=50$) rejects many tests early, but suffers from an aggressive expenditure of sample collection resources and is unable to test beyond the 20th gene. 
ERO ($n=3$) is able to reject more tests, but fails to reject early hypotheses, observes a noisier measurement of the true differential gene expression, and benefits from a piggybacking effect for later tests.
Cost-aware ERO distributes the finite allocation of samples across a smaller set of genes than ERO ($n=3$), but a larger set of genes than ERO ($n=50$). 
It allocates, on average, $\bar{n^*_j} = 15.5$ samples per test which strikes a balance between expenditure of $W_{\alpha}$ and $W_{\$}$.
}
\label{fig:gene-expression}
\end{figure*}

Figure~\ref{fig:gene-expression} shows the cost-aware and ERO decision rules on the prostate gene expression data set. 
The ERO method with $n=50$ selects many tests, but rapidly expends $W_{\$}$, as it does not optimize the sample size.
The ERO method with $n=3$ is able to test a much greater number of genes because it is limited in the amount of $W_{\$}$ expenditure per test.
While it may appear that the ERO method with $n=3$ is a much more favorable result, there are some critical concerns with this rejection sequence.
First, ERO ($n=3$) fails to reject nearly all of the tests in the first 50 genes. 
Among those genes are many that clearly have a strong differential gene expression signature, $\bar{x}_j$ shown in Figure~\ref{fig:gene-expression}.
Congruent with our observations of the effect of piggybacking in ERO investing (Figure~\ref{fig:piggy_backing}), had one of the early tests appeared later in the sequence, after ERO had accumulated a significant $\alpha$-wealth reserve, it would have been rejected.
Second, ERO ($n=3$) received a much noisier observation of the true differential gene expression signature compared to ERO ($n=50$).
As can be seen in Figure~\ref{fig:gene-expression}, ERO ($n=50$) does reject many early tests that do display a 2-fold increase in gene expression in the tumor compared to the normal cells when we observe all 50 samples.
ERO ($n=3$) likely fails to reject true alternative hypotheses, in part, due to the fact that it does not have access to enough samples to accurately assess the true differential expression level.
Cost-aware ERO allocates, on average $\bar{n^*_j} = 15.5$ samples per test.
This allocation strikes an balance between ERO with $n=3$ and $n=50$.
It is provides a more accurate measurement of the differential gene expression than ERO ($n=3$), it does not expend the sample collection resources as aggressively as ERO ($n=50$).
Finally, it is not as susceptible to the (random) ordering of the tests compared to ERO ($n=3$) which has the issue $\alpha$-piggybacking or ERO ($n=50$) which has the issue of \emph{betting the farm} in terms of sample resource wealth.

\subsection{LINCS L1000}
The Library of Integrated Network-Based Cellular Signatures (LINCS) NIH Common Fund program was established to provide publicly available data to study how cells respond to genetic stressors, such as perturbations by therapeutics \citep{geo}. The data considered is made up of L1000 assays of 1220 cell lines. The L1000 assay provides mRNA expression for 978 landmark genes. Differential gene expression is then calculated under a protocol known as level 5 preprocessing. \citet{cyclegan} infer the remaining genes using a CycleGAN and make the predictions available on their lab's webpage\footnote{\text{https://maayanlab.cloud/sigcom-lincs/\#/Download}}.

Data was prepared in a similar fashion to the prostate cancer data. Data was available for $1220$ samples which experienced a $10$ uM perturbation Vorinostat. 
Differential expression for $23,614$ genes against controls were processed as per the L1000 Level 5 protocol. Following this protocol, we divided the values by the standard deviation for each individual gene so that the data had unit variance. 
Our experimental protocol utilized 100 samples to estimate $q$. 
We set $q_j = 1 - \left( 1 + \exp(-\beta ([\bar{x_j}]_{1:100} - x_0) )\right)^{-1}$, where $x_0 = 2/\sigma$ and $\beta = 0.6$. 
The distribution of $q_j$ reflects our prior belief that most genes are likely to belong to the null hypothesis. Samples used for this estimation were shuffled between iterations. For both procedures $c_j=1, \forall j$, and $W_{\$}[0] = 100000$.
Our method was allowed $n \leq 1120$ samples while the ERO used $n = 1120$ samples. 
The order of genes was randomly shuffled and the procedures were repeated $1,000$ times to collect average statistics.
One sided Gaussian tests were performed where $\mu_0 = 0$ and $\mu_A = 0.5$. $\sigma = 1$ is assumed since data is already standardized.


Our method (CAERO) results in $797$ tests with $176.89$ rejections and an average sample size per test of $n=108$.
while ERO ($n=1120$) results in $90$ tests with $32.49$ rejections.  
The results on this data set are consistent with observations for the prostate cancer gene expression data set in that the ERO procedure expends its sample budget long before the $\alpha$-wealth has been exhausted.



\section{DISCUSSION}
\label{sec:discussion}
We have introduced an ERO generalized $\alpha$-investing procedure that has a self contained decision rule. 
This rule removes the need for a user-specified allocation scheme and optimally selects the sample size for each test. 
We have shown empirical results in support of the benefits of optimizing these testing parameters rather than being left to user choice. 

The cost-aware ERO methods does require the specification of the prior for the null, $q_j$.
We have shown that the number of tests and true rejections is not sensitive to variability in $q_j$, but is sensitive to bias in $q_j$ - for example, if the investigator is systematically optimistic.
For future work, it would be useful to investigate online learning methods for estimating $q_j$ and robust optimization formulations of the cost-aware ERO decision rule to reduce this sensitivity.

Cost-aware ERO does not, yet, have an explicit mechanism to hedge the risk of dollar wealth or $\alpha$-wealth loss. 
The current optimization problem assumes a risk-neutral player who wishes to not lose $\alpha$-wealth, on average, when conducting a test.
Since this desire is expressed in expectation, the variance of actual outcomes can be large, leading to $\alpha$-death without some constraint on the relative expenditure of $\alpha$-wealth.
For future work, it would be interesting to investigate a principled risk-hedging approach to conserve some wealth for future tests with the hope that a test with a more favorable reward structure is over the horizon.

The results from applying ERO and cost-aware ERO (Figure~\ref{fig:gene-expression}) show that there is a trade-off between expenditure of $W_{\alpha}$ and $W_{\$}$.
In our formulation of the problem, we have assumed that the initial $W_{\$}$ is fixed and can only decrease.
It would be interesting to apply a similar line of reasoning to $W_{\$}$ that was used to move from $\alpha$-spending to $\alpha$-investing.
Specifically, if a test is rejected, there may be some reward towards $W_{\$}$.
Then the decision rule may be modified to maximize $W_{\$}$ or to constrain it to be a martingale process as we have done here with $W_{\alpha}$.


\bibliography{biblio}

\begin{thebibliography}{}

\bibitem[Aharoni and Rosset, 2014]{aharoni2014generalized}
Aharoni, E. and Rosset, S. (2014).
\newblock Generalized $\alpha$-investing: definitions, optimality results and application to public databases.
\newblock {\em Journal of the Royal Statistical Society: Series B (Statistical Methodology)}, 76(4):771--794.

\bibitem[Arrow et~al., 1949]{arrow1949bayes}
Arrow, K.~J., Blackwell, D., and Girshick, M.~A. (1949).
\newblock Bayes and minimax solutions of sequential decision problems.
\newblock {\em Econometrica, Journal of the Econometric Society}, pages 213--244.

\bibitem[Benjamini and Hochberg, 1995]{benjamini1995controlling}
Benjamini, Y. and Hochberg, Y. (1995).
\newblock Controlling the false discovery rate: a practical and powerful approach to multiple testing.
\newblock {\em Journal of the Royal statistical society: series B (Methodological)}, 57(1):289--300.

\bibitem[Benjamini et~al., 2006]{benjamini2006adaptive}
Benjamini, Y., Krieger, A.~M., and Yekutieli, D. (2006).
\newblock Adaptive linear step-up procedures that control the false discovery rate.
\newblock {\em Biometrika}, 93(3):491--507.

\bibitem[Berger, 2013]{berger2013statistical}
Berger, J.~O. (2013).
\newblock {\em Statistical decision theory and Bayesian analysis}.
\newblock Springer Science \& Business Media.

\bibitem[Blackwell and Girshick, 1979]{blackwell1979theory}
Blackwell, D.~A. and Girshick, M.~A. (1979).
\newblock {\em Theory of games and statistical decisions}.
\newblock Courier Corporation.

\bibitem[Chen and Kasiviswanathan, 2020]{chen2020contextual}
Chen, S. and Kasiviswanathan, S. (2020).
\newblock Contextual online false discovery rate control.
\newblock In {\em International Conference on Artificial Intelligence and Statistics}, pages 952--961. PMLR.

\bibitem[Dettling, 2004]{dettling2004bagboosting}
Dettling, M. (2004).
\newblock Bagboosting for tumor classification with gene expression data.
\newblock {\em Bioinformatics}, 20(18):3583--3593.

\bibitem[Dickey and Lientz, 1970]{dickey1970weighted}
Dickey, J.~M. and Lientz, B. (1970).
\newblock The weighted likelihood ratio, sharp hypotheses about chances, the order of a markov chain.
\newblock {\em The Annals of Mathematical Statistics}, pages 214--226.

\bibitem[Drud, 1994]{drud1994conopt}
Drud, A.~S. (1994).
\newblock Conopt—a large-scale grg code.
\newblock {\em ORSA Journal on computing}, 6(2):207--216.

\bibitem[Edgar et~al., 2002]{geo}
Edgar, R., Domrachev, M., and Lash, A. (2002).
\newblock Edgar r, domrachev m, lash aegene expression omnibus: Ncbi gene expression and hybridization array data repository. nucl acids res 30: 207-210.
\newblock {\em Nucleic acids research}, 30:207--10.

\bibitem[Foster and Stine, 2008]{foster2008alpha}
Foster, D.~P. and Stine, R.~A. (2008).
\newblock $\alpha$-investing: a procedure for sequential control of expected false discoveries.
\newblock {\em Journal of the Royal Statistical Society: Series B (Statistical Methodology)}, 70(2):429--444.

\bibitem[Javanmard and Montanari, 2018]{javanmard2018online}
Javanmard, A. and Montanari, A. (2018).
\newblock Online rules for control of false discovery rate and false discovery exceedance.
\newblock {\em The Annals of statistics}, 46(2):526--554.

\bibitem[Jeon et~al., 2022]{cyclegan}
Jeon, M., Xie, Z., Evangelista, J.~E., Wojciechowicz, M.~L., Clarke, D. J.~B., and Ma'ayan, A. (2022).
\newblock Transforming l1000 profiles to rna-seq-like profiles with deep learning.
\newblock {\em BMC Bioinformatics}, 23.

\bibitem[Liang and Nettleton, 2012]{liang2012adaptive}
Liang, K. and Nettleton, D. (2012).
\newblock Adaptive and dynamic adaptive procedures for false discovery rate control and estimation.
\newblock {\em Journal of the Royal Statistical Society: Series B (Statistical Methodology)}, 74(1):163--182.

\bibitem[Parmigiani and Inoue, 2009]{parmigiani2009decision}
Parmigiani, G. and Inoue, L. (2009).
\newblock {\em Decision theory: Principles and approaches}.
\newblock John Wiley \& Sons.

\bibitem[Ramdas et~al., 2019]{ramdas2019sequential}
Ramdas, A., Chen, J., Wainwright, M.~J., and Jordan, M.~I. (2019).
\newblock A sequential algorithm for false discovery rate control on directed acyclic graphs.
\newblock {\em Biometrika}, 106(1):69--86.

\bibitem[Ramdas et~al., 2017]{ramdas2017online}
Ramdas, A., Yang, F., Wainwright, M.~J., and Jordan, M.~I. (2017).
\newblock Online control of the false discovery rate with decaying memory.
\newblock {\em Advances in neural information processing systems}, 30.

\bibitem[Ramdas et~al., 2018]{ramdas2018saffron}
Ramdas, A., Zrnic, T., Wainwright, M., and Jordan, M. (2018).
\newblock Saffron: an adaptive algorithm for online control of the false discovery rate.
\newblock In {\em International conference on machine learning}, pages 4286--4294. PMLR.

\bibitem[Robertson et~al., 2023]{robertson2023online}
Robertson, D.~S., Wason, J. M.~S., and Ramdas, A. (2023).
\newblock Online multiple hypothesis testing.

\bibitem[Robertson et~al., 2019]{2019onlineFDRpackage}
Robertson, D.~S., Wildenhain, J., Javanmard, A., and Karp, N.~A. (2019).
\newblock {onlineFDR: an R package to control the false discovery rate for growing data repositories}.
\newblock {\em Bioinformatics}, 35(20):4196--4199.

\bibitem[Storey, 2002]{storey2002direct}
Storey, J.~D. (2002).
\newblock A direct approach to false discovery rates.
\newblock {\em Journal of the Royal Statistical Society: Series B (Statistical Methodology)}, 64(3):479--498.

\bibitem[Storey et~al., 2004]{storey2004strong}
Storey, J.~D., Taylor, J.~E., and Siegmund, D. (2004).
\newblock Strong control, conservative point estimation and simultaneous conservative consistency of false discovery rates: a unified approach.
\newblock {\em Journal of the Royal Statistical Society: Series B (Statistical Methodology)}, 66(1):187--205.

\bibitem[Tukey, 1991]{tukey1991philosophy}
Tukey, J.~W. (1991).
\newblock The philosophy of multiple comparisons.
\newblock {\em Statistical science}, pages 100--116.

\bibitem[Tukey and Braun, 1994]{tukey1994collected}
Tukey, J.~W. and Braun, H. (1994).
\newblock {\em The Collected Works of John W. Tukey: Multiple Comparions, Volume VIII}, volume~8.
\newblock Elsevier.

\bibitem[Verdinelli and Wasserman, 1995]{verdinelli1995computing}
Verdinelli, I. and Wasserman, L. (1995).
\newblock Computing bayes factors using a generalization of the savage-dickey density ratio.
\newblock {\em Journal of the American Statistical Association}, 90(430):614--618.

\bibitem[Xia et~al., 2017]{xia2017neuralfdr}
Xia, F., Zhang, M.~J., Zou, J.~Y., and Tse, D. (2017).
\newblock Neural{FDR}: Learning discovery thresholds from hypothesis features.
\newblock {\em Advances in neural information processing systems}, 30.

\bibitem[Yang et~al., 2017]{yang2017framework}
Yang, F., Ramdas, A., Jamieson, K.~G., and Wainwright, M.~J. (2017).
\newblock A framework for multi-a (rmed)/b (andit) testing with online fdr control.
\newblock {\em Advances in Neural Information Processing Systems}, 30.

\bibitem[Zeisel et~al., 2011]{zeisel2011fdr}
Zeisel, A., Zuk, O., and Domany, E. (2011).
\newblock Fdr control with adaptive procedures and fdr monotonicity.
\newblock {\em The Annals of applied statistics}, pages 943--968.

\bibitem[Zhou et~al., 2005]{zhou}
Zhou, J., Foster, D., Stine, R., and Ungar, L. (2005).
\newblock Streaming feature selection using alpha-investing.
\newblock In {\em Proceedings of the Eleventh ACM SIGKDD International Conference on Knowledge Discovery in Data Mining}, KDD '05, page 384–393, New York, NY, USA. Association for Computing Machinery.

\bibitem[Zrnic et~al., 2021]{zrnic2021asynchronous}
Zrnic, T., Ramdas, A., and Jordan, M.~I. (2021).
\newblock Asynchronous online testing of multiple hypotheses.
\newblock {\em J. Mach. Learn. Res.}, 22:33--1.

\end{thebibliography}

\clearpage

\appendix

\onecolumn

\section{THEORETICAL ANALYSIS OF LONG-TERM ALPHA-WEALTH AND COST-AWARE ERO SOLUTION}\label{sec:proof}

\subsection{Proof of Lemma~\ref{lemma:fs-alpha-bound}}
\begin{proof}[Proof of Lemma \ref{lemma:fs-alpha-bound}]
	The expected increment in $\alpha$-wealth is
	\begin{equation*}
		\mathbb{E}[W_{\alpha}(j) - W_{\alpha}(j-1)] = \mathbb{E}[R_j]\alpha - \mathbb{E}[1-R_j]\frac{\alpha_j}{1-\alpha_j}.
	\end{equation*}
	This equation requires the probability of rejection, which can be written in factorized form as
	\begin{equation*}
		\Pr(R_j=1) = \Pr(R_j=1 | \theta_j \in H_j) \Pr(\theta_j \in H_j) + \Pr(R_j=1 | \theta_j \not\in H_j) \Pr(\theta_j \not\in H_j).
	\end{equation*}
	Now, $\Pr(R_j=1 | \theta_j \in H_j) \leq \alpha_j$ by Assumption~\ref{eqn:assumption1} and $\Pr(R_j=1 | \theta_j \not\in H_j) \leq \rho_j$ by Assumption~\ref{eqn:assumption2}.
	Defining $\Pr(\theta_j \in H_j) = q_{j}$ gives the result.
\end{proof}

\subsection{Proof of Theorem~ \ref{theorem:long-term-alpha-wealth-scenario1}}

\begin{proof}
	
Since $\Theta_j=\{0,\bar{\theta}_j\}$, by lemma \ref{lemma:fs-alpha-bound} we have 
$$\mathbb{E} \left[W_{\alpha}(j)- W_{\alpha}(j-1) \mid W_{\alpha}(j-1)\right] = -\frac{\alpha_{j}}{1-\alpha_{j}}+\left[\rho_{j}-\left(\rho_{j}-\alpha_{j}\right) q_{j}\right]\left(\alpha+\frac{\alpha_{j}}{1-\alpha_{j}}\right)$$

We define $M_j := \rho_j - (\rho_j-\alpha_j) q_{j}$, then $\{W_\alpha(j) : j \in \mathbb{N}\}$ is submartingale, if and only if
\begin{equation*}
	M_j \geq \frac{\alpha_j/(1-\alpha_j)}{\alpha+\alpha_j/(1-\alpha_j)}.   
\end{equation*} 
Since $M_j = \rho_j - (\rho_j-\alpha_j) q_{j} >\rho_j(1-q_{j})$, thus $\{W_\alpha(j)\}$ is submartingale if
\begin{equation*}
\rho_j \geq \frac{\alpha_j/(1-\alpha_j)}{\alpha+\alpha_j/(1-\alpha_j)}\frac{1}{1-q_{j}}.
\end{equation*} 
\end{proof}

\subsection{Proof of Theorem~ \ref{theorem:long-term-alpha-wealth-scenario2}}
\begin{proof}
Let $M_j := \rho_j - (\rho_j-\alpha_j) q_{j}$, by Lemma~\ref{lemma:fs-alpha-bound}, $\{W_\alpha(j) : j \in \mathbb{N}\}$ is supermartingale, if
\begin{equation}\label{eq:nonincresing2}
	M_j \leq \frac{\alpha_j/(1-\alpha_j)}{\alpha+\alpha_j/(1-\alpha_j)}.  
\end{equation}

Next we define $s_j \in [0, (1-\alpha_j)/\alpha_j]$, a positive number to control how large the power is for the $j$th test, such that
$$\rho_j=s_j\alpha_j/(1-\alpha_j)$$
And we have
$$\rho_j-\alpha_j=[(s_j-1)\alpha_j+\alpha_j^2]/(1-\alpha_j)\geq (s_j-1)\alpha_j/(1-\alpha_j).$$
Thus,
$$M_j \leq \frac{s_j\alpha_j}{1-\alpha_j}-\frac{(s_j-1)\alpha_j}{1-\alpha_j} q_{j}.$$
The condition in \eqref{eq:nonincresing2} becomes
$$ \frac{1-\alpha_j}{\alpha_j} M_j \leq s_j-(s_j-1)q_j =s_j(1-q_j)+q_j \leq \frac{1}{\alpha+\alpha_j/(1-\alpha_j)}.$$
Thus, for a given $q_j$, the condition on $s_j$ for stochastically non-increasing wealth is
\begin{equation}
	s_j \leq  \left(\frac{1}{\alpha+\alpha_j/(1-\alpha_j)}-q_j\right)/(1-q_j).
	\label{eq:sj}
\end{equation}
The upper-bound in condition ~\eqref{eq:sj} is valid if it is positive. For $j$ large enough, if $\alpha_j/(1-\alpha_j)<\alpha$, then
$$\frac{1}{\alpha+\alpha_j/(1-\alpha_j)}-q_j>\frac{1}{2\alpha}-q_j.$$
If $\alpha_j<1/2$, this term is positive and the upper-bound  for $s_j$ is positive.
\end{proof}

\subsection{Proof of Theorem~ \ref{theorem:long-term-alpha-wealth-martingale}}
\begin{proof}
Following the notation of \citet{aharoni2014generalized}, the expected increment in $\alpha$-wealth is

\begin{equation}
    \mathbb{E}[W_{\alpha}(j) - W_{\alpha}(j-1) | W_\alpha(j-1)] = \mathbb{E}[R_j | \{R_1, \dots, R_{j-1} \} ](-\varphi + \psi_j) - \mathbb{E}[1-R_j| \{R_1, \dots, R_{j-1} \}](-\varphi).
    \label{eq:increment}
\end{equation}

By definition, $\alpha$-wealth is martingale when this increment is equal to $0$. Next, let $\mathbb{E}^{j-1}[X] = \mathbb{E}(X| \{X_1, \dots , X_{j-1}\}$. Then equation \ref{eq:increment} becomes:
\begin{equation}
    0 = \mathbb{E}^{j-1}[R_j]( -\varphi + \psi_j) + \mathbb{E}^{j-1}[1-R_j]( -\varphi )
    \label{eq:increment-2}
\end{equation}

Expanding $\mathbb{E}^{j-1}[R_j]$ in terms of variables in equation \ref{eqn:opt2} gives
\begin{equation*}
    \mathbb{E}^{j-1}[R_j] = \Pr(R_j=1) = \Pr(R_j=1 | \theta_j \in H_j) \Pr(\theta_j \in H_j) + \Pr(R_j=1 | \theta_j \not\in H_j) \Pr(\theta_j \not\in H_j).
\end{equation*}

By our previous assumptions, $\Pr(R_j=1 | \theta_j \in H_j) \leq \alpha_j$ and $\Pr(R_j=1 | \theta_j \not\in H_j) \leq \rho_j$. We define $\Pr(\theta_j \in H_j) = q_{j}$, and hence $\Pr(\theta_j \not\in H_j) = 1-q_{j}$.

Simplifying equation \ref{eq:increment-2} yields

\begin{equation}
    0 = (q_j \alpha_j + (1-q_j) \rho_j) (-\varphi_j + \psi_j) + (q_j(1-\alpha_j) + (1-q_j)(1 - \rho_j))(-\varphi_j)
    \label{eq:equalizing}
\end{equation} 

Solving equation \ref{eq:equalizing} for $\rho_j$
\begin{equation*}
    \rho_j = \left( \frac{1}{1-q_j} \right) \left( \frac{\varphi_j}{\psi_j} - q_j \alpha_j\right)
\end{equation*}

This implies that $\rho_j \propto \frac{\varphi_j}{\psi_j} - q_j \alpha_j$. This implies that the power of the test must balance the probability of rejection under the null and the ratio of the cost and reward of the test.  
\end{proof}

\subsection{Existence and Uniqueness of Solution}

Since the solution to the cost-aware ERO problem is infact an ERO solution, the existence of a solution is proven in Lemma 2 of \citet{aharoni2014generalized} given some assumptions which hold for a uniformly most powerful test with a continuous distribution function. Since these are the types of tests being considered in the current work, the necessary assumptions are met. 

\begin{theorem}
    In the cost-aware ERO solution with $\lambda = 0$, $\varphi$ is unique.
\end{theorem}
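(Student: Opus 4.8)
The plan is to reduce Problem~\eqref{eqn:opt2} to the maximization of the single scalar $\varphi_j$ over a compact set; uniqueness of the optimal $\varphi_j$ then follows at once, since the maximum of a real coordinate over a compact feasible set is a well-defined single number. The lever is the equalizing (martingale) constraint~\eqref{eqn:opt2-indifference}, which makes the objective identically equal to $\varphi_j$ on the whole feasible region.

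First I would expand~\eqref{eqn:opt2-indifference} exactly as in the text: since $\Pr[R_j=1]+\Pr[R_j=0]=1$, the constraint $0=\mathbb{E}[\Delta W_\alpha]=(-\varphi_j+\psi_j)\Pr[R_j=1]+(-\varphi_j)\Pr[R_j=0]$ collapses to $\psi_j\Pr[R_j=1]=\varphi_j$. By the definitions under~\eqref{eqn:opt2-indifference}, $\mathbb{E}_\theta(R_j)=\Pr[R_j=1]=\alpha_jq_j+\rho_j(1-q_j)$, so this says $\mathbb{E}_\theta(R_j)\psi_j=\varphi_j$ at every feasible point, and Problem~\eqref{eqn:opt2} is equivalent to $\max\{\varphi_j:(\varphi_j,\alpha_j,\psi_j,n_j)\text{ feasible}\}$. (Using the active forms of~\eqref{eqn:opt2-fdrconst1}, \eqref{eqn:opt2-fdrconst2} and~\eqref{eqn:opt2-eroconst} one can further collapse the equality constraints to $\rho_j=k\alpha_j$ and $\varphi_j=\tfrac{k}{k-1}\alpha_j$ with $k=\tfrac{q_j(1-\alpha)}{\alpha(1-q_j)}$ — so a nondegenerate solution exists only when $q_j>\alpha$ — and then maximizing $\varphi_j$, maximizing $\alpha_j$, and maximizing the objective all coincide; this explicit form is a useful check but is not needed for the argument.)

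Second I would fix any $\delta$ strictly below the optimal objective value and show the superlevel set $F_\delta:=\{(\varphi_j,\alpha_j,\psi_j,n_j)\text{ feasible}:\varphi_j\ge\delta\}$ is nonempty and compact. Nonemptiness follows from Lemma~2 of \citet{aharoni2014generalized} (an ERO solution exists for each admissible sample size) together with the $(\alpha,\$)$-wealth constraints being satisfiable. For boundedness: $\varphi_j\le a\,W_{\alpha}(j-1)$ is exactly the cap~\eqref{eqn:opt2-alphawealthconst}; $n_j$ is bounded above by~\eqref{eqn:opt2-dollarconst} and bounded below by $1$ since~\eqref{eqn:opt2-eroconst}--\eqref{eqn:opt2-powerconst} have no solution with $\rho_j=\alpha_j$; $\alpha_j,\rho_j\in[0,1]$; and on $F_\delta$ the Gaussian tail in~\eqref{eqn:opt2-powerconst} together with $n_j\ge 1$ keeps $\rho_j$ (hence $\alpha_j$, hence $\rho_j-\alpha_j=\rho_j\alpha_j/\varphi_j$, hence $\psi_j=\varphi_j/\Pr[R_j=1]$) in a bounded, non-degenerate range, so $F_\delta$ is bounded. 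Closedness follows because $\Phi$ and the rational constraint maps are continuous on the region $\{\rho_j>\alpha_j>0\}$ that contains $F_\delta$, and every inequality constraint is non-strict.

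Finally, the coordinate projection $\pi_\varphi$ is continuous, so for fixed $\delta\in(0,\sup\pi_\varphi(F))$ the set $\pi_\varphi(F_\delta)$ is a nonempty compact subset of $\mathbb{R}$; since $\sup\pi_\varphi(F)=\sup\pi_\varphi(F_\delta)$, this supremum is attained (as $\max\pi_\varphi(F_\delta)$) and is a single real number $\varphi^{\star}$. Because the objective equals $\varphi_j$ on $F$, every optimal solution of Problem~\eqref{eqn:opt2} satisfies $\varphi_j=\varphi^{\star}$, which is the claim. I expect the hard part to be the boundedness and closedness of $F_\delta$ — specifically excluding feasible sequences along which $\rho_j-\alpha_j\to 0$ (which would send both $\varphi_j$ and $\psi_j$ to infinity) and controlling~\eqref{eqn:opt2-powerconst} as $\alpha_j\to 0$ via Mills-ratio tail estimates. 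The wealth cap~\eqref{eqn:opt2-alphawealthconst} is precisely the ingredient that supplies the needed a priori bound; without it, as in the original ERO formulation where $\varphi_j$ is left to the investigator, the statement is false.
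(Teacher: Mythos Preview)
Your argument and the paper's share the same core observation: the equalizing constraint~\eqref{eqn:opt2-indifference} forces $\mathbb{E}_\theta(R_j)\psi_j=\varphi_j$ at every feasible point, so any two optimal solutions must have the same $\varphi_j$. You derive this identity in one line by expanding $\mathbb{E}[\Delta W_\alpha]$; the paper reaches the same conclusion by substituting the expression for $\rho_j$ from Theorem~\ref{theorem:long-term-alpha-wealth-martingale} into the objective and simplifying, which is the same computation carried out less directly.

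The main difference is scope. Once the identity $\text{objective}=\varphi_j$ is established, uniqueness of $\varphi_j$ among optimizers is immediate; the paper stops there and handles existence separately by citing Lemma~2 of \citet{aharoni2014generalized}. Your compactness discussion of the superlevel set $F_\delta$ is therefore extra work devoted to existence rather than to uniqueness, and is not needed for the theorem as stated. Your parenthetical reduction to $\rho_j=k\alpha_j$, $\varphi_j=\tfrac{k}{k-1}\alpha_j$ with $k=q_j(1-\alpha)/(\alpha(1-q_j))$ is correct and is a sharper structural observation than anything in the paper's proof, but again it is not required for the uniqueness claim.
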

\begin{proof}
    Suppose $\exists$ a solution  $(\varphi_j^*, \psi_j^*, \alpha_j^*, \rho_j^*, n_j^*)$ such that
\begin{equation}
    \mathbb{E}(R_j) \psi_j = \mathbb{E}(R_j)^* \psi_j^*.
    \label{eq:expected-reward}
\end{equation}

Expanding the expectation of rejections in equation \ref{eq:expected-reward} yields
\begin{equation}
    (q_j \alpha_j + (1-q_j) \rho_j) \psi_j = (q_j \alpha_j^* + (1-q_j) \rho_j^*) \psi_j^*.
    \label{eq:expected-reward2}
\end{equation}
As per Lemma 2, the $\alpha$-wealth is martingale when using a solution to the cost-aware ERO optimization problem. Applying theorem 3 gives
\begin{equation}
    \rho_j = \left(\frac{1}{1-q_j}\right)\left(\frac{\varphi_j}{\psi_j} - q_j \alpha_j\right)
    \label{eq:rho1}
\end{equation}
\begin{equation}
    \rho_j^* = \left(\frac{1}{1-q_j}\right)\left(\frac{\varphi_j^*}{\psi_j^*} - q_j \alpha_j^*\right).
    \label{eq:rho2}
\end{equation}

Substituting equations \ref{eq:rho1} and \ref{eq:rho2} into equation \ref{eq:expected-reward2} gives

\begin{equation}
    q_j \alpha_j + (1-q_j) \left( \left(\frac{1}{1-q_j}\right) \left( \frac{\varphi_j}{\psi_j} - q_j \alpha_j \right) \right)\psi_j = q_j \alpha_j^* + (1-q_j) \left( \left(\frac{1}{1-q_j}\right) \left( \frac{\varphi_j^*}{\psi_j^*} - q_j \alpha_j^* \right) \right)\psi_j^*
\end{equation}
\begin{equation}
    \left( q_j \alpha_j + \frac{\varphi_j}{\psi_j} - q_j \alpha_j \right) \psi_j  =
     \left( q_j \alpha_j^* + \frac{\varphi_j^*}{\psi_j^*} - q_j \alpha_j^* \right) \psi_j^*
\end{equation}
\begin{equation}
    \left( \frac{\varphi_j}{\psi_j} \right)\psi_j=
    \left( \frac{\varphi_j^*}{\psi_j^*} \right)\psi_j^*
\end{equation}
\begin{equation}
    \varphi_j = \varphi_j^*
    \label{eq:unique-varphi}
\end{equation}
\end{proof}

Since the sample size, $n_j$ is now made a free parameter, a natural question is whether or not a unique $n_j$ can be selected. This is not necessarily the case. Consider the solution $(\varphi_j, \psi_j, \alpha_j, \rho_j, n_j)$ to the cost-aware ERO problem. Assume that a continuous distribution function is used. We now show that $(\psi_j, \alpha_j, \rho_j, n_j)$ are not necessarily unique.

 Suppose $\exists$ a solution  $(\varphi_j^*, \psi_j^*, \alpha_j^*, \rho_j^*, n_j^*)$ such that
\begin{equation}
    \mathbb{E}(R_j) \psi_j = \mathbb{E}(R_j)^* \psi_j^*.
\end{equation}
From equation \ref{eq:unique-varphi}, we know that $\varphi_j = \varphi_j^*$. From \citet{aharoni2014generalized}, any solution that is ERO must satisfy
\begin{equation}
    \frac{\varphi_j}{\rho_j} = \frac{\varphi_j}{\alpha_j} - 1.
    \label{eq:ero-ar1}
\end{equation}
Using equation \ref{eq:ero-ar1} it follows that
\begin{equation}
    \frac{\varphi_j^*}{\rho_j^*} = \frac{\varphi_j^*}{\alpha_j^*} - 1.
    \label{eq:ero-ar2}
\end{equation}
Solving equations \ref{eq:ero-ar1} and \ref{eq:ero-ar2} for $\varphi_j$ and $\varphi_j^*$ respectively give
\begin{equation}
    \varphi_j = \frac{1}{\frac{1}{\alpha_j} - \frac{1}{\rho_j}}
    \label{eq:ero-varphi1}
\end{equation}
\begin{equation}
    \varphi_j^* = \frac{1}{\frac{1}{\alpha_j^*} - \frac{1}{\rho_j^*}}
    \label{eq:ero-varphi2}
\end{equation}
Substituting equations \ref{eq:ero-varphi1} and \ref{eq:ero-varphi2} into equation \ref{eq:unique-varphi} gives
\begin{equation}
    \frac{1}{\frac{1}{\alpha_j} - \frac{1}{\rho_j}} = \frac{1}{\frac{1}{\alpha_j^*} - \frac{1}{\rho_j^*}}
\end{equation}
Simplifying gives
\begin{equation}
    \frac{1}{\alpha_j} - \frac{1}{\alpha_j^*} = \frac{1}{\rho_j} - \frac{1}{\rho_j^*}
    \label{eq:alpha-rho}
\end{equation}
Suppose $\alpha_j^* > \alpha_j$. It follows that $\rho_j^* > \rho_j$. Without loss of generality (with respect to the test statistic having a continuous distribution function), assume the test statistic is normally distributed. Writing out $\rho_j$ and $\rho_j^*$ explicitly then implies that
\begin{equation}
    1 - \Phi \left( z_{1 - \alpha_j^*} - \frac{\bar{\theta_j}}{\frac{\sigma_j}{\sqrt{n_j^*}}}\right) > 1 - \Phi \left( z_{1 - \alpha_j} - \frac{\bar{\theta_j}}{\frac{\sigma_j}{\sqrt{n_j}}}\right)
\end{equation}
\begin{equation}
    z_{1 - \alpha_j^*} - \frac{\bar{\theta_j}}{\frac{\sigma_j}{\sqrt{n_j^*}}} < z_{1 - \alpha_j} - \frac{\bar{\theta_j}}{\frac{\sigma_j}{\sqrt{n_j}}}
\end{equation}
\begin{equation}
    \frac{\bar{\theta_j}}{\frac{\sigma_j}{\sqrt{n_j}}} - \frac{\bar{\theta_j}}{\frac{\sigma_j}{\sqrt{n_j^*}}} < z_{1-\alpha_j} - z_{1-\alpha_j^*}
\end{equation}
\begin{equation}
    \sqrt{n_j} - \sqrt{n_j^*} < \frac{\sigma_j}{\bar{\theta_j}}\left( z_{1-\alpha_j} - z_{1-\alpha_j^*} \right)
    \label{eq:non-unique}
\end{equation}
Equation \ref{eq:non-unique} shows that a range on $n$ values can be used. In certain scenarios, this allows $(\psi_j, \alpha_j, \rho_j, n_j) \neq (\psi_j^*, \alpha_j^*, \rho_j^*, n_j^*)$. Considering the case when $\alpha_j^* < \alpha_j$ results in equation \ref{eq:non-unique} having the inequality reversed. Note that $n_j=1$ is not necessarily permitted by this range. Including $n_j$ in our problem is still useful, despite not being unique, since an a-priori specification may not yield the same maximal expected reward as leaving $n_j$ to be optimized.

\clearpage
\section{SIMULATION DETAILS}
\label{sec:simulation-setup}

In this section we describe simulations in greater detail so that our work can be fully reproduced. We briefly present the cost-aware ERO $\alpha$-investing method in algorithmic form. All baseline methods were based on initial values and code in \citet{2019onlineFDRpackage}.
\begin{algorithm}
\begin{algorithmic}
\State Input $\alpha$, $W_\alpha(0)$, $W_{\$}(0)$

\State $j \gets 0$
\While{$W_{\alpha}(j) > \epsilon$ and $W_{\$}(j) > \epsilon$}

\State Define $q_j$, $c_j$ for hypothesis $j$

\State Solve Problem 18 to obtain $\varphi_j$, $\alpha_j$, $\rho_j$, $\psi_j$, and $n_j$

\State Collect data $(x_{j1}, \ldots , x_{jn_j})$ and compute p-value $p_j$.

\If{$p_j \leq \alpha_j$}
\State $R_j \gets 1$
\Else 
\State $R_j \gets 0$
\EndIf

\State Update $W_{\$}(j+1) \gets W_{\$}(j) - c_j n_j$

\State Update $W_{\alpha}(j+1) \gets W_{\alpha}(j)  - \varphi_j + R_j \psi_j$
\State $j \gets j + 1$
\EndWhile
\end{algorithmic}
\caption{Cost-aware ERO Algorithm}
\end{algorithm}

We now provide a comparison of the usage of information of the hypothesis stream and prespecified parameters that each method considered in our simulation studies uses.

\defcitealias{javanmard2018online}{JM18}
\begin{table*}[h!]
    \centering
    \begin{tabular}{lrrrr}
\toprule
         &   Error Criterion & Params. Needed at Test & Prespecified Parameters & Incorporating Priors\\
  Method &          &               &             &    \\
\midrule
$\alpha$-investing &  mFDR & - & Prespecified spending scheme & Spending scheme\\
\midrule
ERO investing &  mFDR & Access to calculating $\rho$ & Prespecified spending scheme & Spending scheme\\
\midrule
LORD &  FDR (indep.), mFDR & - & $\gamma$, $w_0$, $b_0$ & Setting $\gamma$ \citepalias{javanmard2018online}\\
\midrule
SAFFRON &  FDR (indep.), mFDR & -& $\lambda$, $w_0$&  Adaptive\\
\midrule
\textbf{CAERO} & mFDR & $q_j$, Access to calculating $\rho$ & $a$, $\lambda$& At test\\

\bottomrule
\end{tabular}
    \caption{Comparison of information required for some online FDR-controlling methods.}
    \label{tbl:comparison}
\end{table*} 

\subsection{Experiment for Table \ref{tbl:sota}}

 For CAERO $(n^*)$, we set the lower bound on $\rho_j = 0.9$, $\lambda = 1e-3$, and $a=0.025$. For CAERO, $ n = 1$ we set the lower bound of $\rho_j = 0.01$. In our simulation we define $\alpha = 0.05$, $W_\alpha(0) = 0.0475$, $W_{\$}(0) = 1000$, $n_{iter} = 10000$ (number of iterations), $m = 1000$ (maximum number of tests per iteration), and $c = 1$ (cost per sample). $W_{\alpha}(0)$ for implementations of LORD and SAFFRON follow suggestions from \citet{javanmard2018online} and \citet{ramdas2018saffron}. An explicit algorithm is given in Algorithm \ref{alg:q9}. A similar experimental set up is used for Table \ref{tbl:sotaq1} and Table \ref{tbl:n10} where $q_j$ and $n_j$ are adjusted respectively. 

\begin{algorithm}
\caption{Simulation run in Table \ref{tbl:sota}}
\label{alg:q9}
\begin{algorithmic}
\State Input $\alpha$, $W_\alpha(0)$, $W_{\$}(0)$, $n_{iter}$, $m$, $c$
\For{$i=0$ to $i=n_{iter}$}
\State Set seed to $i$
\State $\boldsymbol{X} =$ [ ]
\For{$j = 0$ to $m$}
\State Sample $\theta_j$, with $Pr(\theta_j = 0) = q_j \sim Unif(.85,.95)$, and $Pr(\theta_j = 2) = 1 - q_j$.
\State $\boldsymbol{X}[j] = 1000$ realizations from $N(\theta_j, 1)$.
\EndFor
\For{each testing method (unique row in Table \ref{tbl:sota})}
\State $j \gets 0$
\While{ $W_{\alpha}(j) > \epsilon$ and $W_{\$}(j) > \epsilon$}
    \State $n_j = 1$ \Comment{Sample size to use if method not cost-aware.}
    \If{Spending and Investing rule separate}
        \State Obtain $\varphi_j$ from spending scheme.
        \State Obtain $\alpha_j$, $\psi_j$ from investing rule $\mathcal{I}$
    \Else{}
        \State Obtain $\varphi_j$, $\alpha_j$, $\psi_j$ from self-contained investing rule $\mathcal{I}$. (If using cost-aware, obtain and update $n_j$).
    \EndIf
    \State Perform 1-sided $Z$-test on $\boldsymbol{X}[j][0:n_j]$, and obtain $p$-value, $p_j$.
    \If{$p_j \leq \alpha_j$}
\State $R_j \gets 1$
\Else 
\State $R_j \gets 0$
\EndIf

\State Update $W_{\$}(j+1) \gets W_{\$}(j) - c n_j$

\State Update $W_{\alpha}(j+1) \gets W_{\alpha}(j)  - \varphi_j + R_j \psi_j$

\State $j \gets j + 1$
\EndWhile
\EndFor
\EndFor
\State Aggregate results
\end{algorithmic}
\end{algorithm}

\subsection{Experiment for Figure \ref{fig:simulation-results}}
We next discuss the experimental details for producing Figure \ref{fig:simulation-results}. For CAERO $(n^*)$, we set the lower bound on $\rho_j = 0.9$, $\lambda = 1e-3$, and $a=0.025$. For CAERO, $ n = 1$ we set the lower bound of $\rho_j = 0.01$. In our simulation we define $\alpha = 0.05$, $W_\alpha(0) = 0.0475$, $W_{\$}(0) = 1e8$, $n_{iter} = 2500$, $m = 1000$, and $c = 1$. An additional $q$, specifically $q_{1001}$ is drawn for solving the finite-horizon optimization problem when we reach the final test. We sample $q_j$ from a Beta$(a,100-a)$ distribution, and then sample whether $\theta_j$ is null or not based on the realization of $q_j$. This sampling scheme and relevant parameter values are given in Algorithm \ref{alg:fig1}.

\begin{algorithm}
\caption{Simulation run in Figure \ref{tbl:sota}}
\label{alg:fig1}
\begin{algorithmic}
\State Input $\alpha$, $W_\alpha(0)$, $W_{\$}(0)$, $n_{iter}$, $m$, $c$
\For{$i=0$ to $i=n_{iter}$}
\State Set seed to $i$
\For{$A\in \{10, 30, 50,  70, 90 \}$}
\State $\boldsymbol{X} =$ [ ]
\For{$j = 0$ to $m$}
\State Sample $q_j \sim Beta(A,100-A)$ 
\State Sample $\theta_j$, with $Pr(\theta_j = 0) = q_j$, and $Pr(\theta_j = 2) = 1 - q_j$.
\State $\boldsymbol{X}[j] = 1000$ realizations from $N(\theta_j, 1)$.
\EndFor
\For{each testing method (unique row in Figure \ref{fig:simulation-results})}
\While{ $W_{\alpha}(j) > \epsilon$ and $W_{\$}(j) > \epsilon$}
    \State Perform while loop in Algorithm \ref{alg:q9}.
\EndWhile
\EndFor
\EndFor
\EndFor
\State Aggregate results
\end{algorithmic}
\end{algorithm}

\subsection{Experiment for Figure \ref{fig:gene-expression}} The real data experiment shown in Figure \ref{fig:gene-expression} and detailed in Section \ref{sec:real-experiments} can be broken down into two steps: preprocessing and testing. 

In preprocessing, we load in two dataframes, one containing gene expression data for 50 normal (non-cancerous) samples ($6033 \times 50$), and a second containing similar data for 52 tumor samples ($6033 \times 52$). We take then mean across the normal samples to obtain a ($6033 \times 1$) vector containing the mean gene expression for normal patients. We calculate the standard deviation in a similar manner and use these vectors to standardize the $(6033 \times 52)$ dataframe containing tumor samples. Next, the first two columns of the tumor samples dataframe is separated from the remaining 50 columns to provide an informed estimate of $q_j$ for each test. It is important to note that we are allowing the potential for misspecification of $q_j$ by using an estimate of only two samples. Using these two samples:
$$
q_j = 1 - \left( 1 + \exp(-\beta ([\bar{x_j}]_{1:2} - x_0) )\right)^{-1},$$ where $x_0 = \log_{10}(4)/\hat{\sigma}$, $\beta = 2$, $[\bar{x_j}]_{1:2}$ denotes the sample mean of the two tumor samples separated from the remaining 50 tumor samples for the $j^{th}$ gene, and $\hat{\sigma}$ is the estimated standard deviation from the normal samples.

During the testing process, we perform a random shuffle of the genes and then run testing. This process is shown in Algorithm \ref{alg:fig2}. In this scenario, we set $\alpha = 0.05$, $W_\alpha(0) = 0.0475$, $W_{\$}(0) = 1000$, $n_{iter} = 1000$ (number of permutations), $m = 6033$, and $c = 1$. We set ERO investing to always use a sample size of $n_j = 50$. For cost-aware ERO, we set the lower bound of $\rho_j = 0.1$ and do not set a restriction on the upper value of $n_j$. However, if the optimized value is greater than 50, we choose to skip the test. Lastly, we set the constraint of $\varphi_j \leq 0.5*(1-q_j)W_{\alpha}(j)$ to avoid quick $\alpha$-death in some permutations. We note that this does not affect tests with large $q_j$ very much, as one might expect, since those tests require small bets in order to keep nature's strategy equalizing.

\begin{algorithm}
\caption{Simulation run in Figure \ref{fig:gene-expression}}
\label{alg:fig2}
\begin{algorithmic}
\State Input $\alpha$, $W_\alpha(0)$, $W_{\$}(0)$, $n_{iter}$, $m$, $c$
\For{$i=0$ to $i=n_{iter}$}
\State Set seed to $i$
\State Randomly shuffle data
\For{Method $\in \{$ ERO, cost-aware ERO$\}$}
    \While{ $W_{\alpha}(j) > \epsilon$ and $W_{\$}(j) > \epsilon$}
    \If{Method is cost-aware ERO}
         \State Set constraint $\varphi_j \leq (1-q_j)W_{\alpha}(j)$
         \State Solve Problem 18 to obtain $\varphi_j$, $\alpha_j$, $\psi_j$, and $n_j$
        \If{$0<n_j\leq50$}
            \State Perform test and update as per other simulations.
        \Else
            \State Skip test
        \EndIf
    \Else
        \State Perform test with sample size $n_j = 50$ and update as per other simulations.
    \EndIf
\EndWhile
\EndFor
\EndFor
\State Aggregate results.
\end{algorithmic}
\end{algorithm}

\clearpage
\section{EXTENSIONS OF COST-AWARE $\alpha$-INVESTING}
\label{sec:extensions}
In this section, we explore extensions of cost-aware ERO $\alpha$-investing.

\subsection{Cost tradeoffs.}
In Problem~\ref{eqn:opt2} the monetary cost does not factor in to the objective except through the constraints.
In many practical applications, it may be useful to simultaneously maximize the $\alpha$-reward and minimize the $\$$-cost.
In those applications, the objective function can be augmented to $\mathbb{E}(R_j) \psi_j - \gamma c_j n_j$, 
where $\gamma$ controls the trade-off between improving $\alpha$-wealth and minimizing $\$$-cost.

\subsection{Variable utility.}
Not all hypotheses may have equal value to the investigator and their value assessment may be independent of their assessment of the prior probability of the null hypothesis \citep{ramdas2017online}. For example, an investigator may be confident that a gene is differentially expressed in a particular tissue based on prior literature.
Then the prior probability that $\theta_j = 0$ is low, $p_j \approx 0$, and the utility of testing that hypothesis is also low.
There may be a different gene that has not been reported to be differentially expressed in the tissue, but if it is it would be a major scientific discovery.
Then, the investigator may assign a high prior probability to the null $\theta_j=0$, but also a high utility to the event that the null is rejected.
A generalized form of the cost-aware decision rule can be constructed to account for varying utility levels for each hypothesis in the batch by making the objective function $\sum_{j=1}^K \mathbb{E}_{\theta}(R_j)U(R_j)\psi_j$, where $U(R_j)$ is the utility of the rejection of the $j$-th null hypothesis.

\subsection{Batch testing.}
Many biological experiments are conducted in batches.
This scenario leads to a need for a decision rule that provides $(\alpha_j, \psi_j, n_j)_{j=1}^{K}$ for a batch of $K$ tests.
To address this need, the objective function in Problem~\ref{eqn:opt2} can be modified to $\sum_{j=1}^K \mathbb{E}_{\theta}(R_j)\psi_j$.
It seems reasonable to expend all of the $\alpha$-wealth for each batch and then collect the reward at the completion of the batch so that a next batch of hypotheses can be tested.
Therefore, we have constraints $\sum_{j=1}^K \varphi_j \leq W_{\alpha}(0)$ and $\sum_{j=1}^K c_j n_j \leq W_{\$}(0)$.
The other constraint remain and apply for each test in the batch.


\clearpage
\section{METHOD COMPARISON WITH $q=0.1$}
\label{sec:sota_q1}

In Table~\ref{tbl:sotaq1} we explore the comparison of cost-aware ERO investing with other methods for $q_j=0.1$. Naturally, when nulls occur infrequently, the issue of multiple testing is not as dire, and in some cases, FDR is controlled without using any correction \cite[Figure 6]{robertson2023online}. 
When true alternatives are abundant, cost-aware ERO requires a large ante ($\varphi_j$). In this simulation we set $a=1$ to highlight this effect. We also relax any lower bound on $\rho_j$.
This causes cost-aware ERO to rapidly deplete the $\alpha$-wealth.
In contrast, other methods do not increase the ante at all, or as severely, as cost-aware ERO.
However, it should be noted that the fraction of the tests that are true rejects among those that are performed is very high.
For example, in constant ERO investing the proportion of true rejects is 24\% and the proportion of true rejects for cost-aware ERO ($n_j \leq 10$) is 90\%.
This is a highly desirable result for the setting of biological experiments and other settings where sample cost is nontrivial.

\begin{table}[h!]
    \centering
    \begin{tabular}{llrrrr}
\toprule
          &         &    Tests &  True Rejects &  False Rejects &  mFDR \\
Scheme & Method &          &               &                &       \\
\midrule 	
constant & $\alpha$-spending &  10.0 &    2.49 &   0.00 &  0.001\\
          & $\alpha$-investing &   932.4 &  231.55 &   0.46 &  0.002\\
          & $\alpha$-rewards $k = 1$ &   925.0 &  230.13 &   0.46 &  0.002 \\
          & $\alpha$-rewards $k = 1.1$ &   926.5 &  221.54 &   0.42 &  0.002 \\
          & ERO investing &   934.3 &  230.87 &   0.45 &  0.002 \\
\midrule
relative & $\alpha$-spending &    66.0 &    4.95 &   0.00 &  0.001 \\
          & $\alpha$-investing &   994.0 &  661.55 &  10.19 &  0.015  \\
          & $\alpha$-rewards $k = 1$ &   989.2 &  416.37 &   2.00 &  0.005 \\
          & $\alpha$-rewards $k = 1.1$  &   991.4 &  626.93 &   7.47 &  0.012 \\
          & ERO investing &   994.8 &  820.57 &  34.14 &  0.040 \\
\midrule          
other & LORD++ &  1000.0 &  322.87 &   1.07 &  0.003 \\
          & LORD1 &  1000.0 &  93.81 &   0.07 &  0.001 \\
          & LORD2 &  1000.0 &  301.45 &   0.94 &  0.004\\
          & LORD3 &  1000.0 &  320.61 &   1.06 &  0.003\\
          & SAFFRON &  1000.0 &  779.92 &  23.92 &  0.030 \\
\midrule
  cost-aware & ERO $n_j = 1$ &     11.1 &    9.94 &   0.15 &  0.013 \\
  cost-aware & ERO $n_j \leq 10$ &    12.8 &   11.48 &   0.21 &  0.017\\
cost-aware & ERO $n_j^*$ &    10.6 &   9.48 &   0.13 &  0.012 \\
\bottomrule
\end{tabular}
    \caption{Comparison of cost-aware $\alpha$-investing with state-of-the-art sequential hypothesis testing methods with a prior probability of the null, $q = 0.1$ using 2,500 iterations. }
    \label{tbl:sotaq1}
\end{table}

\clearpage
\section{SENSITIVITY ANALYSIS WITH RESPECT TO $q_j$}
\label{sec:uncertainty-q}

Since the cost-aware ERO method makes use of the prior probability of the null hypothesis, $q_j$,  we investigate the sensitivity of the method to misspecification of that parameter.
Table~\ref{tbl:caero_abuncertain-q} shows the number of tests, mean true rejects, mean false rejects, and mFDR for simulation where the $q_j$ provided for optimization is misspecified. Specifically, we vary the specified $q_j$, when holding the true $q_j$ fixed at $0.9$. We performed $10,000$ iterations where cost-aware $\alpha$-investing is restricted to a single sample and $a=1$.

\begin{table}[h!]
    \centering
    \begin{tabular}{lllrrrr}
\toprule
          &    Tests &  True Rejects &  False Rejects &  mFDR \\
Specified q    & &  & & \\
\midrule
 0.50 & 2.7 & 0.13 & 0.06 & 0.049 \\
 0.70 & 10.6 & 0.33 & 0.06 & 0.047 \\	
 0.80 & 32.9 & 0.72 & 0.08 & 0.047 \\
 0.85 & 92.4 & 1.58 & 0.13 & 0.049 \\
 0.89 & 282.4 & 3.54 & 0.20 & 0.044 \\
 0.90 & 365.0 & 4.15 & 0.22 & 0.041 \\
\bottomrule
\end{tabular}
    \caption{Varying the magnitude of misspecification of $q_j$ shows that small deviations from the true value do not dramatically change performance, however, larger misspecifications result in fewer tests performed and fewer rejections. However, mFDR is still controlled.}
    \label{tbl:caero_abuncertain-q_2}
\end{table}

We now consider the effect on performance for the CAERO method presented in the main results. We draw $q_j \sim$ Unif$(0.65,0.95)$. We consider running the CAERO with the true $q_j$, $\hat{q}_j$ with $N(0,0.2)$ noise, and lastly negatively bias these $\hat{q}_j$ by $\{0.01,0.05,0.1,0.2,0.3,0.4\}$. For numerical stability we truncate all $\hat{q}_j \in [0.01,0.99]$.

\begin{table}[h!]
    \centering
    \begin{tabular}{lllrrrr}
\toprule
          &    Tests &  True Rejects &  False Rejects &  mFDR \\
Specified q    & &  & & \\
\midrule
 True & 230.5 & 39.13 & 0.43 & 0.011 \\
 \midrule
 Noisy $q$ & 230.3 & 39.09 & 0.21 & 0.005 \\	
 \midrule
 Noisy $q$, bias = -0.01 & 225.4 & 38.26 & 0.17 & 0.004 \\
 Noisy $q$, bias = -0.05 & 213.6 & 36.41 & 0.13 & 0.003 \\
 Noisy $q$, bias = -0.1 & 206.0 & 35.11 & 0.10 & 0.003 \\
 Noisy $q$, bias = -0.2 & 197.1 & 33.70 & 0.07 & 0.002 \\
 Noisy $q$, bias = -0.3 & 192.4 & 32.90 & 0.07 & 0.002 \\
 Noisy $q$, bias = -0.4 & 189.8 & 32.44 & 0.06 & 0.002 \\
\bottomrule
\end{tabular}
    \caption{When constraining $\varphi_j$ and allowing $n_j$ to be selected adaptively, the harmful effects of prior misspecification can be reduced. Bias in prior specification is more harmful than a noisy estimate.}
    \label{tbl:caero_abuncertain-q}
\end{table}


\clearpage
\section{COMPARISON WITH OTHER METHODS WITH $n_j=10$}
\label{sec:large_n}

The simulation study used in Table~\ref{tbl:sota} was repeated with setting $n=10$ for the existing methods and $n_j \leq 10$ for cost-aware ERO. 
The cost per sample was set to $c_j=1$, and the total budget was $W_{\$}[0] = 1000$. 
Hence, for methods that do not optimize sample size, the number of tests was limited to $100$. 

Cost-aware ERO performs more tests and rejects more true alternative hypotheses than existing methods. These results demostrate that even when state-of-the-art methods have access to larger sample sizes, the ability to optimize the sample size results in better performance.

\begin{table}[h!]
    \centering
    \begin{tabular}{llrrrr}
\toprule
          &         &    Tests &  True Rejects &  False Rejects &  mFDR \\
Scheme & Method &          &               &                &       \\
\midrule
constant  & spending        &   10.0 &   1.00 &  0.05 &  0.02 \\
          & investing       &   54.0 &   5.40 &  0.23 &  0.035 \\
          & rewards k = 1   &   47.8 &   4.79 &  0.20 &  0.034 \\
          & rewards k = 1.1 &   49.1 &   4.91 &  0.19 &  0.031 \\
          & ERO investing   &   54.0 &   5.40 &  0.23 &  0.035 \\
\midrule
relative  & spending        &   66.0 &   6.55 &  0.05 &  0.006 \\
          & investing       &   99.9 &  10.00 &  0.52 &  0.045 \\
          & rewards k = 1   &   99.9 &  10.00 &  0.45 &  0.039 \\
          & rewards k = 1.1 &   99.9 &  10.00 &  0.40 &  0.036 \\
          & ERO investing   &   99.9 &  10.00 &  0.52 &  0.045 \\
\midrule       
other     & LORD++          &   100.0 &  10.00 &  0.16 &  0.014 \\
          & LORD1           &  100.0 &  9.99 &  0.07 &  0.006 \\
          & LORD2           &  100.0 &  10.00 &  0.16 &  0.014 \\
          & LORD3           &  100.0 &  10.00 &  0.16 &  0.014 \\
          & SAFFRON         &  100.0 &  10.00 &  0.44 &  0.038 \\
\bottomrule
\end{tabular}
    \caption{Using a non-optimized pure strategy of setting n=10 (max n permissible in cost-aware simulations in Table~\ref{tbl:sota}). Although using a larger number of samples for each test gives more powerful tests, the number of samples used by cost-aware ERO investing is lower than all methods other than a constant spending scheme for $\alpha$-spending, which only gives a single true rejection.}
    \label{tbl:n10}
\end{table}

\newpage
\section{TWO-STEP COST AWARE ERO INVESTING}
\label{sec:two-step}
In order to set nature's strategy to equalizing in the two-step optimal procedure, the expected change in $\alpha$-wealth must be equal no matter what strategy the experimenter uses. It follows that the expected change in $\alpha$-wealth must be found for each strategy, and the system of equations solved.

In order for the martingale solution for the two-step game to hold, the following equations must hold.

\begin{multline}
    0 = P(TT_1)(-\varphi_1 - \varphi_2 + \alpha_1\psi_1 + \alpha_2\psi_2) + P(TT_2)(-\varphi_1 - \varphi_2 + \alpha_1\psi_1 + \rho_2 \psi_2) \\ +   P(TT_3)(-\varphi_1 - \varphi_2 + \rho_1\psi_1 + \alpha_2\psi_2) 
    + P(TT_4)(-\varphi_1 - \varphi_2 + \rho_1\psi_1 + \rho_2\psi_2)
\end{multline}
\begin{equation}
    0 = P(TS_1) (-\varphi_1 + \alpha_1\psi_1) + P(TS_2) (-\varphi_1 + \rho_1 \psi_1)
\end{equation}
\begin{equation}
    0 = P(ST_1)(-\varphi_2 + \alpha_2\psi_2) + P(ST_2)(-\varphi_2+ \rho_2\psi_2),
\end{equation}

where 

\begin{equation}
    P(TT_1) = (q_1)(q_2)
\end{equation}
\begin{equation}
    P(TT_2) = (q_1)(1-q_2)
\end{equation}
\begin{equation}
    P(TT_3) = (1-q_1)(q_2)
\end{equation}
\begin{equation}
    P(TT_4) = (1-q_1)(1-q_2)
\end{equation}
\begin{equation}
    P(TS_1) = q_1
\end{equation}
\begin{equation}
    P(TS_2) = 1-q_1
\end{equation}
\begin{equation}
    P(ST_1) = q_2
\end{equation}
\begin{equation}
    P(ST_2) = 1-q_2
\end{equation}

\end{document}